\documentclass[letterpaper]{article} 
\usepackage[preprint]{aaai2027}  
\usepackage[hyphens]{url}  
\usepackage{graphicx} 
\usepackage{natbib}  
\usepackage{caption} 
\usepackage{algorithm}
\usepackage{amsmath}
\usepackage{amssymb}
 \usepackage{multirow} 
\usepackage{mathtools}
\usepackage{amsthm}
\usepackage{booktabs}
\usepackage{tabularx}
\usepackage{microtype}
\usepackage{graphicx}
\usepackage{subcaption}
\usepackage{booktabs} 
\usepackage{threeparttable}
\usepackage{amsmath,amssymb,amsthm,booktabs,algpseudocode}
\newtheorem{proposition}{Proposition}

\usepackage{newfloat}
\usepackage{listings}
\usepackage{amsmath,amssymb,amsthm}

\DeclareMathOperator{\clip}{clip}
\DeclareCaptionStyle{ruled}{labelfont=normalfont,labelsep=colon,strut=off} 
\floatstyle{ruled}
\newfloat{listing}{tb}{lst}{}
\floatname{listing}{Listing}

\usepackage{booktabs}

\title{Matterhorn: Masked Time-to-First-Spike Encoding by Reassigning the Silent State \\ for Sparse and Energy-Efficient Spiking Transformers }
\author{
    Zhanglu Yan\textsuperscript{\rm 1},
    Kaiwen Tang\textsuperscript{\rm 1},
    Zixuan Zhu\textsuperscript{\rm 1,\rm 3,\rm 4},
    Zhenyu Bai\textsuperscript{\rm 1},
    Qianhui Liu\textsuperscript{\rm 2},
    Yongxin Zhu\textsuperscript{\rm 3},
    Weng-Fai Wong\textsuperscript{\rm 1}
}
\affiliations{
    \textsuperscript{\rm 1}Department of Computer Science, National University of Singapore\\
    \textsuperscript{\rm 2}School of Artificial Intelligence, Shandong University\\
    \textsuperscript{\rm 3}Shanghai Advanced Research Institute, Chinese Academy of Sciences\\
    \textsuperscript{\rm 4}University of Chinese Academy of Sciences, Beijing
}

\begin{document}

\maketitle

\begin{abstract}
Spiking neural networks (SNNs) promise energy-efficient inference for large language models (LLMs), yet most reported savings rely on compute-operation counts that overlook data movement. Energy characterization of representative spiking transformers on a commercial 22-nm process shows that accumulation contributes less than 3\% of total energy, while spike-triggered inter-core transfers and weight reads dominate the cost. This makes time-to-first-spike (TTFS) encoding a natural choice, as it limits each neuron to at most one spike. However, standard TTFS maps the silent state—an all-zero spike train that transmits no events—to the rarely occurring smallest value, while the most common activations still spike. This raises a simple question: why reserve the only cost-free codeword for a rare value? This choice inverts a basic principle of energy-aware coding, under which the zero-event codeword should represent the most common value, rather than a rare extreme. Thus, we introduce masked time-to-first-spike encoding (M-TTFS), which uses a temporal mask to reassign the silent state to the most common activation value, and a dead-zone extension that trades a controlled amount of information for greater sparsity. 
Built on M-TTFS with dead-zone radius $k{=}1$, our spiking
transformer Matterhorn reaches an overall spike rate of 1.64\% on
GLUE at an average score of 84.64, exceeding the best prior
spiking transformer by 1.42 percentage points while consuming
67\% less energy, with consistent gains on spiking LLaMA models
from 7B to 70B parameters. Together, these results show that under hardware-faithful accounting, the energy advantage of SNNs is not a given: it is earned by encodings that align spikes with the data distribution.

\end{abstract}

\section{Introduction}

Spiking neural networks (SNNs) are increasingly studied for energy-efficient
large language model (LLM) inference~\cite{wang2025energy, xing2024spikellm}. The promise rests largely on a single substitution: replacing multiply-accumulate (MAC) operations with cheaper accumulate (ACC) operations. On this basis, Sorbet~\cite{tangsorbet} and SpikeLM~\cite{xingspikelm} report 27.16\texttimes{} and 12.9\texttimes{} efficiency gains, estimated from operation counts alone comparing with baseline quantized ANNs (QNNs). However, such estimates miss most of the real hardware cost: our energy characterization of state-of-the-art spiking transformers on a commercial 22nm process (Figure~\ref{fig:energy_breakdown}) shows that ACC computation accounts for only 1.4-2.6\% of total energy but data movement dominates the rest. 

\begin{figure}[t]
    \centering
    \includegraphics[width=0.91\linewidth]{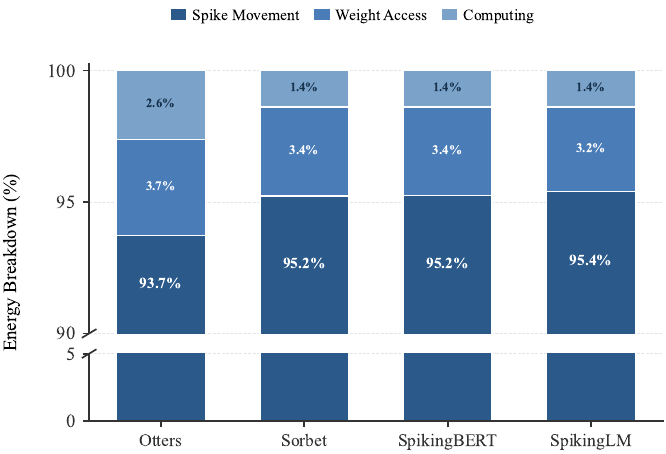}
    \caption{Proportional Energy Breakdown of SOTA spiking transformers. Otters (we use its standard-TTFS variant), Sorbet (rate encoding), SpikingBERT (rate encoding), and SpikingLM (rate encoding) are evaluated with time steps $T$ set to 15, 16, 16, and 4, and average spike rates of 5.14\%, 13\%, 25\%, and 33\%, respectively, as reported in~\cite{bal2024spikingbert, xingspikelm, tangsorbet, yan2025otters}. SNN models are assumed to be deployed on a spatial dataflow architecture as described in Section~\ref{sec: energy_cal}.}
    \label{fig:energy_breakdown}
\end{figure}

Yet SNNs have no inherent advantage in this dominant cost. A QNN transmits
each activation once as a dense $n$-bit integer, whereas an SNN represents the
same activation as events over a $T$-step window. This event-based
representation is a double-edged sword: sparsity can reduce the number of
transfers, but each sparse event carries address and routing overhead, giving
it a higher effective transfer cost than dense QNN streaming
~\cite{davies2018loihi,bai2025data}. Each firing also triggers an associated
weight read, another cost omitted by operation-count estimates. Thus, sparse
spikes are not automatically cheaper; they must be sufficiently rare to offset
their higher per-event cost. A recent end-to-end comparison between SNNs and
their quantized twins estimates that, under representative neuromorphic
settings, an SNN becomes more energy-efficient only when the expected number
of spikes per neuron per window, denoted by $s$, falls below approximately
$0.35$. For $T=16$, this corresponds to a per-timestep spike rate below
$2.2\%$~\cite{yan2024reconsidering}.

Within an SNN, the encoding determines the spike count. Rate encoding, the prevailing choice in spiking transformers, represents magnitude by firing frequency, so spikes grow with activation value: reported spike rates of 13--33\% with around 16 time steps~\cite{bal2024spikingbert,xingspikelm} sit roughly an order of magnitude above the crossover. 
Time-to-first-spike (TTFS) limits each neuron to at most one spike~\cite{sakemi2023sparse,sun2025temporal,lew2022ttfstraining}, yet standard TTFS still produces a spike rate of 3.74\% at $T=16$ ($s\approx0.60$), well above the
crossover. The problem lies in its code assignment: standard TTFS reserves the
silent all-zero codeword for the smallest representable value, a rare negative
extreme under symmetric quantization. This raises a simple but overlooked question: \textbf{why reserve the only cost-free codeword for a rare value?} The basic principle of energy-aware coding says the opposite: the cheapest codeword belongs to the most frequent symbol; after all, the cheapest event is not a one-bit spike, but no spike at all.

We therefore propose \emph{masked time-to-first-spike} (M-TTFS),
a distribution-aware extension of TTFS that reassigns the silent
codeword to the most frequent activation value. Instead of allowing
silence to represent a rare boundary value, M-TTFS masks the dominant
spike time and uses the freed terminal time to encode the original
lowest value. We show that this assignment is optimal among all
single-silence, single-spike encodings by minimizing the expected
spike count. Moreover, with a simple offset folding into the next
layer's bias, M-TTFS preserves the source QNN computation exactly:
silence changes the representation, not the information. Beyond this
lossless setting, we introduce a dead-zone extension that expands the
silent region around the mode with a controllable radius \(k\),
trading a bounded accuracy loss for additional sparsity. 
\textbf{TTFS limits how often an activation can spike; M-TTFS decides
which activation values deserve to remain silent.}

Built on M-TTFS, we construct \emph{Matterhorn}, a spiking
transformer obtained by QNN-to-SNN
conversion.
At the standard $T=16$ setting, the lossless reassignment alone
($k=0$) cuts the spike rate from 3.74\% to 2.60\% and the energy
per encoder block on SST-2 from 117.5 to 77.19\,mJ; at $k=1$, the
rate falls to 1.64\% ($s\approx0.26$) and the energy to
48.27\,mJ, which is 33\% below the 71.54\,mJ of the dense INT4
network it is converted from, an advantage that holds on every individual
task and across the entire plausible per-event cost range. On GLUE, Matterhorn ($k=1$)
surpasses the best prior spiking transformer by 1.42 percentage
points in average score with 67\% less energy. Applied to spiking LLaMA models from 7B to 70B parameters,
lossless M-TTFS reduces spike rate at identical perplexity,
while larger dead-zone radii provide further reductions with
a controlled perplexity increase.
Taken together, these results make a simple point: the energy
efficiency of SNNs does not come from spiking itself; it must be
earned by an encoding that places silence on the mode of the data
distribution.

\section{Preliminary}

\subsection{Time-To-First-Spike encoding}\label{sec: ttfs}

 {\em Time-to-first-spike} (TTFS) encoding carries information solely in the arrival time of the first and only spike within the window $T$~\cite{zhao2025ttfsformer}. The operation of a standard TTFS neuron proceeds in two phases: integration and threshold comparison. First, the neuron decodes temporal information using a kernel decay function $f(t)$ (typically $f(t) = T - t$)~\cite{yan2025otters}. It updates its membrane potential $V_j^l(t)$ by accumulating weighted inputs from presynaptic neurons:
\begin{equation}
    V_j^l(t) = V_j^l(t-1) + \sum_{i} w^l_{ij} \cdot s^{l-1}_i(t) \cdot f(t)
    \label{eq: membrane_potential_acc}
\end{equation}

Second, the neuron compares this potential to a firing threshold $\theta^l(t)$. A spike is generated at the specific time step $t$ where the potential meets or exceeds the threshold.

Thus, the firing time is the first instance $t$ such that $V_j^l(t) \ge \theta^l(t)$. If the accumulated potential never exceeds the threshold, the neuron remains silent (a spike train of all zeros). While this silent state maximizes energy efficiency by eliminating data movement costs, it introduces a trade-off. Standard TTFS often suffers from ``over-sparsity,'' where neurons that fail to fire cannot effectively learn or propagate information~\cite{wei2023temporal}. To address this, we adopt the Dynamic Firing Threshold (DFT) model~\cite{wei2023temporal}, which enforces a synchronous, layer-by-layer schedule to preserve causal relationships. Furthermore, to avoid the convergence instability of direct training, we employ a QNN-to-SNN conversion framework~\cite{yan2025otters}.

\subsection{Energy calculation}\label{sec: energy_cal}

 The total energy consumption is decomposed into two primary components: Computation and Data Movement. Computation includes membrane potential accumulation, decay operations, and threshold comparisons. Data movement encompasses inter-core spike transfer, SRAM access for weights/thresholds, and KV cache read/write~\cite{yan2024reconsidering}. Our analysis assumes a spatial dataflow architecture where information (e.g., spike packets) is communicated over a Network-on-Chip (NoC). This design is representative of modern specialized hardware, including neuromorphic chips like Loihi~\cite{davies2018loihi} and dataflow accelerators such as Sambanova~\cite{prabhakar2022sambanova}. The energy for the Fully Connected (FC) layer ($E_{\text{FC}}$) and Attention Q/K/V computation ($E_{\text{qkv}}$) is defined as:

\begin{equation}
\begin{split}
    E_{\text{FC}} &= N^{\text{fc}}_o \cdot \Bigg[ 
    C_i T  s_r (\underbrace{E^{\text{ACC}}_{\text{Decay}} + E_{\text{MAC}} + E^{\text{Read}}_{\text{W}} + E^{\text{sparse}}_{\text{move}}}_{\text{Spike Processing}}) \\
    &   + \underbrace{T (E_{\text{CMP}} + E^{\text{Read}}_{\text{th}})}_{\text{Thresholding}} + \underbrace{E^{\text{Write}}_{\text{kv}}}_{\text{K/V Write}} \Bigg]
\end{split}
\label{eq: opt-fc}
\end{equation}

\begin{equation}
\begin{split}
E_{\text{qkv}}
={}&
N^{\text{qkv}}_o
\Bigg[
d_k T s_r
\underbrace{
\left(
E^{\text{ACC}}_{\text{Enc}}
+E_{\text{MAC}}
+E^{\text{sparse}}_{\text{move}}
+E^{\text{Read}}_{\text{KV}}
\right)
}_{\text{Spike Processing}}
\\
&\qquad+
\underbrace{
T\left(
E_{\text{CMP}}+E^{\text{Read}}_{\text{th}}
\right)
}_{\text{Thresholding}}
\Bigg].
\end{split}
\label{eq:opto-score}
\end{equation}

where $N^{\text{fc}}_o = B \cdot S \cdot C_o$ and $N^{\text{qkv}}_o = B \cdot h \cdot S^2$ represent the total number of output elements. Here, $B$ denotes the batch size, $S$ is the sequence length, $C_i/C_o$ are the input/output channel dimensions, and $d_k$ is the per-head dimension. The term $s_r$ denotes the input spike ratio, which linearly scales the spike processing energy.

\section{Method}
\label{sec:method}

M-TTFS reassigns the zero-cost silent state from the lowest
quantized activation to the most frequent activation region.
Section~\ref{sec:mttfs} defines the event-time assignment and
distribution-aware temporal mask. Section~\ref{sec:offset_fold}
preserves the downstream weighted sum when the selected silent
value is nonzero. Section~\ref{sec:qnn_conversion} identifies the
dead-zone-projected QNN exactly implemented by M-TTFS, and
Section~\ref{sec:dz_training} presents its training procedure.

\subsection{M-TTFS: Distribution-Aware Silent-State Assignment}
\label{sec:mttfs}

An \(n\)-bit quantized activation has \(T=2^n\) representable
integer levels. Let \(c\) denote the largest level, so the complete
set is
\[
    \{c-(T-1),\ldots,c\}.
\]
Standard TTFS represents the lowest level by silence and the
remaining \(T-1\) levels by events. M-TTFS instead represents the lowest level by a terminal
event at \(t=T-1\), freeing silence for a distribution-aware
assignment. Similar decreasing threshold schedule is used in M-TTFS:
\begin{equation}
    \theta^\ell(t)
    =
    \alpha^\ell(c-t),
    \qquad
    t\in\{0,\ldots,T-2\},
\label{eq:threshold_schedule}
\end{equation}
where \(\alpha^\ell>0\) is the layer-wise activation scale.

Defining the set of ordinary event times as
\begin{equation}
    \mathcal{T}_j^\ell
    =
    \left\{
    t\in\{0,\ldots,T-2\}:
    V_j^\ell\geq\theta^\ell(t)
    \right\}.
\label{eq:ordinary_event_times}
\end{equation}
We assign neuron \(j\) the event time
\begin{equation}
    \tau_j^\ell
    =
    \min
    \left(
    \mathcal{T}_j^\ell
    \cup
    \{T-1\}
    \right).
\label{eq:event_time}
\end{equation}

Thus, \(\tau_j^\ell\) is the earliest time step at which the
membrane potential meets the threshold. When no ordinary time step
meets the threshold, M-TTFS assigns the terminal time \(T-1\).
Every activation therefore has exactly one event time before
masking.

We estimate the pre-mask event-time distribution of each layer on a held-out calibration set \(\mathcal C\). For a layer with \(N^\ell\)
activation elements per input, we define
\begin{equation}
    \widehat{p}_t^\ell
    =
    \frac{1}{|\mathcal C|N^\ell}
    \sum_{x\in\mathcal C}
    \sum_{j=1}^{N^\ell}
    \mathbb{I}
    \left(
    \tau_{j,x}^\ell=t
    \right),
    \qquad
    t\in\{0,\ldots,T-1\}.
\label{eq:mttfs_histogram}
\end{equation}
Section~\ref{sec:qnn_conversion} establishes a bijection between
quantized integer levels and event times. Therefore,
Eq.~\eqref{eq:mttfs_histogram} can be computed directly from QNN
activation histograms, without time-step SNN simulation.

Then, based on $ \widehat{p}_t^\ell$, the most frequent event time can be selected as:
\begin{equation}
    I_{\max}^\ell
    \in
    \arg\max_{t\in\{0,\ldots,T-1\}}
    \widehat{p}_t^\ell.
\label{eq:mttfs_imax}
\end{equation}
We then optionally expand the silent set to a radius-\(k\)
neighbourhood around it:
\begin{equation}
    \Omega_k^\ell
    =
    \left\{
    t\in\{0,\ldots,T-1\}:
    \left|t-I_{\max}^\ell\right|\leq k
    \right\},
    \qquad
    k\geq0.
\label{eq:dz_set}
\end{equation}
The transmitted spike train is
\begin{equation}
    s_j^\ell(t)
    =
    \mathbb{I}
    \left(
    t=\tau_j^\ell
    \right)
    \mathbb{I}
    \left(
    t\notin\Omega_k^\ell
    \right),
    \qquad
    t\in\{0,\ldots,T-1\}.
\label{eq:mttfs_mask}
\end{equation}

\begin{figure}[htb]
    \centering
    \includegraphics[width=0.93\linewidth]{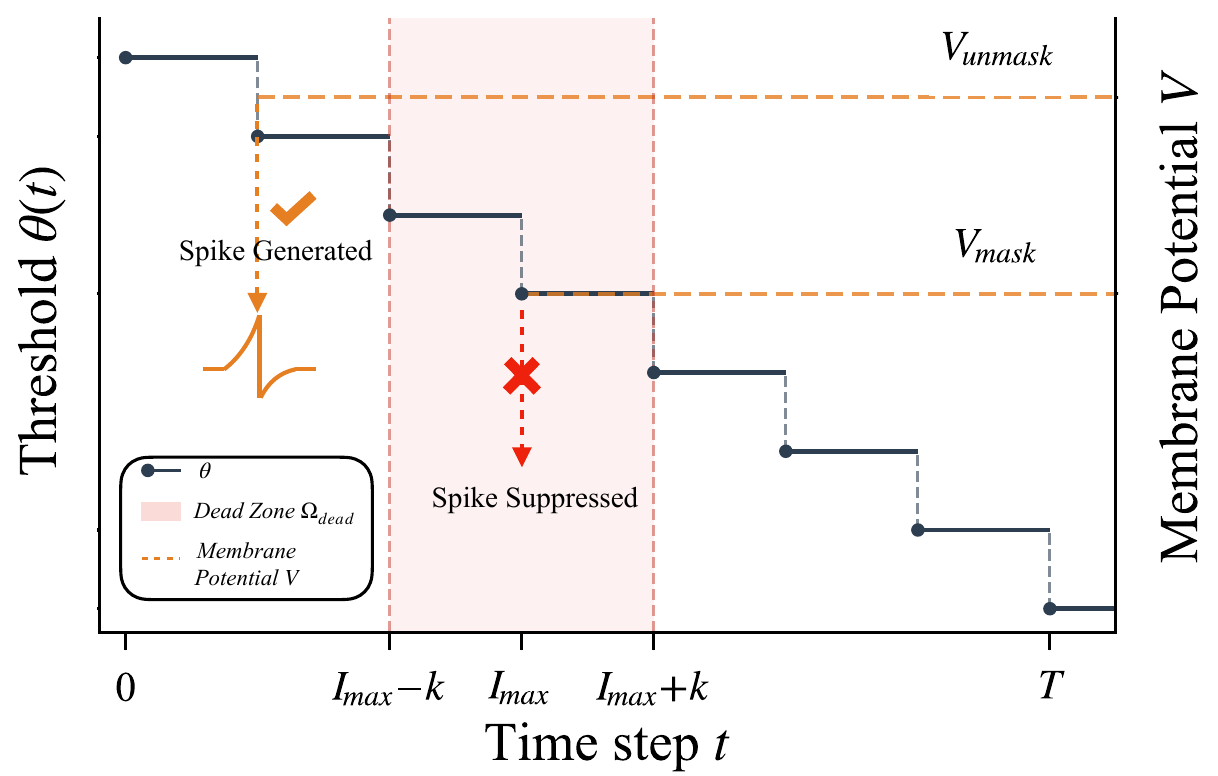}
    \caption{
    M-TTFS assigns silence to the most frequent event time.
    Increasing \(k\) expands the silent set to a temporal dead zone
    around \(I_{\max}^\ell\).
    }
    \label{fig:mask}
\end{figure}

Core M-TTFS uses \(k=0\), assigning silence to the most frequent
event time. The dead-zone extension uses \(k\geq1\) and suppresses
a wider neighbourhood around it. Because \(\tau_j^\ell\) is fixed
before masking, suppression removes the event rather than moving it
to a later time step.

Let
\(\widehat{s}_{\mathrm{tx}}^\ell(\Omega_k^\ell)\)
denote the average number of transmitted events per activation on
the calibration set. Since every activation has exactly one event
time before masking,
\begin{equation}
    \widehat{s}_{\mathrm{tx}}^\ell(\Omega_k^\ell)
    =
    1-
    \sum_{t\in\Omega_k^\ell}
    \widehat{p}_t^\ell.
\label{eq:dz_event_count}
\end{equation}
The transmitted activity therefore cannot increase as the silent
set grows. In particular, core M-TTFS transmits
\(
1-\widehat{p}_{I_{\max}^\ell}^\ell
\)
events per activation.

\begin{proposition}[Optimal single-time silence assignment]
\label{prop:optimal_mask}
Among all assignments that map one event time to silence, core
M-TTFS minimizes the average number of transmitted events on the
calibration set.
\end{proposition}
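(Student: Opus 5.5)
The argument reduces to Eq.~\eqref{eq:dz_event_count} specialised to a single silent time. First I will make the class of competitors precise: an assignment that maps one event time to silence is a choice of some \(u\in\{0,\ldots,T-1\}\) together with the masked spike train \(s_j^\ell(t)=\mathbb{I}(t=\tau_j^\ell)\,\mathbb{I}(t\neq u)\). The pre-mask event times \(\tau_{j,x}^\ell\) are held fixed across these assignments, since they are determined by Eq.~\eqref{eq:event_time} and do not depend on \(u\). Standard TTFS fits this template with \(u=T-1\), the terminal time that carries the lowest level. Core M-TTFS is the case \(u=I_{\max}^\ell\).

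Next I will count transmitted events for a fixed \(u\). By Eq.~\eqref{eq:event_time}, every activation \((x,j)\) has exactly one pre-mask event time. So activation \((x,j)\) transmits exactly one event if \(\tau_{j,x}^\ell\neq u\) and none otherwise. Averaging over \(x\in\mathcal C\) and \(j\le N^\ell\), and using the definition in Eq.~\eqref{eq:mttfs_histogram}, gives
\begin{equation*}
\widehat{s}_{\mathrm{tx}}^\ell(\{u\}) \;=\; 1-\widehat{p}_u^\ell .
\end{equation*}
This is Eq.~\eqref{eq:dz_event_count} with \(\Omega=\{u\}\).

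Minimising \(1-\widehat{p}_u^\ell\) over \(u\) is the same as maximising \(\widehat{p}_u^\ell\). By Eq.~\eqref{eq:mttfs_imax}, \(I_{\max}^\ell\) attains this maximum, so \(1-\widehat{p}_{I_{\max}^\ell}^\ell\le 1-\widehat{p}_u^\ell\) for every \(u\). Hence core M-TTFS is optimal. If the argmax is not unique, every maximiser gives the same value, so the claim holds whichever tie-breaking rule is used.

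The main obstacle is modelling rather than computation: I must make sure "assignments" really range over single silent event times on top of the fixed one-event-per-activation code. If one also allowed permuting which quantized levels map to which times, I would invoke the bijection between levels and event times from Section~\ref{sec:qnn_conversion}. Any such relabelling only permutes the histogram \(\widehat{p}^\ell\), so the best achievable value is still \(1-\max_t\widehat{p}_t^\ell\), and the same bound goes through.
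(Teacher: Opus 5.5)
Your proposal is correct and follows the paper's proof. Both arguments use the fact that every activation has exactly one pre-mask event time to obtain $\widehat{s}_{\mathrm{tx}}^\ell(\{u\})=1-\widehat{p}_u^\ell$, and then minimize by taking $u=I_{\max}^\ell$; your remarks on tie-breaking and on relabelling levels via the level--time bijection are sound additions that the paper does not spell out.
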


\begin{proof}
Assigning silence to event time \(r\) suppresses a fraction
\(\widehat{p}_r^\ell\) of activations and yields
\[
    \widehat{s}_{\mathrm{tx}}^\ell(r)
    =
    1-\widehat{p}_r^\ell.
\]
This quantity is minimized by
\(r\in\arg\max_t\widehat{p}_t^\ell\), which is precisely
\(I_{\max}^\ell\) in Eq.~\eqref{eq:mttfs_imax}.
\end{proof}

\subsection{Offset Folding for Nonzero Silent Values}
\label{sec:offset_fold}

Under event-driven accumulation, an event at time \(t\) contributes
through the linear decay function
\begin{equation}
    f^\ell(t)
    =
    \alpha^\ell(c-t),
    \qquad
    t\in\{0,\ldots,T-1\}.
\label{eq:decay_function}
\end{equation}
An event at time \(t\) therefore represents \(f^\ell(t)\). In
particular,
\[
    f^\ell(T-1)
    =
    \alpha^\ell\bigl(c-(T-1)\bigr),
\]
so the terminal event represents the lowest quantized activation.
Silence contributes zero under event-driven accumulation. Therefore,
directly suppressing an event that represents a nonzero value would
change the downstream weighted sum. To give the silent region a
well-defined numerical value, we use the value represented by its
centre time. Specifically,
\begin{equation}
    \mu^\ell
    =
    c-I_{\max}^\ell
\label{eq:silent_integer}
\end{equation}
is the quantized integer associated with
\(I_{\max}^\ell\), and its activation value is
\(\alpha^\ell\mu^\ell\). M-TTFS assigns this value to all event
times in \(\Omega_k^\ell\).

To realize this assignment under event-driven accumulation, each
transmitted event carries its deviation from
\(\alpha^\ell\mu^\ell\). We therefore define the centred decay
function
\begin{equation}
    \widetilde{f}^\ell(t)
    =
    f^\ell(t)
    -
    \alpha^\ell\mu^\ell
    =
    \alpha^\ell
    \left(
    c-t-\mu^\ell
    \right).
\label{eq:centered_decay}
\end{equation}
The removed constant is restored by folding it into the bias of the
next layer:
\begin{equation}
    \widetilde{b}_j^{\ell+1}
    =
    b_j^{\ell+1}
    +
    \alpha^\ell\mu^\ell
    \sum_i
    w_{ij}^{\ell+1}.
\label{eq:folded_bias}
\end{equation}
After layer \(\ell\) completes its output window, layer \(\ell+1\)
integrates
\begin{equation}
    V_j^{\ell+1}
    =
    \widetilde{b}_j^{\ell+1}
    +
    \sum_i
    w_{ij}^{\ell+1}
    \sum_{t=0}^{T-1}
    s_i^\ell(t)
    \widetilde{f}^\ell(t).
\label{eq:mttfs_membrane}
\end{equation}
The folded bias restores the common contribution
\(\alpha^\ell\mu^\ell\), while each transmitted event contributes
only its deviation from this value.
Section~\ref{sec:qnn_conversion} then formalizes this assignment as a
dead-zone projection and proves the resulting QNN--M-TTFS
equivalence.

\subsection{QNN-to-M-TTFS Conversion}
\label{sec:qnn_conversion}

We now define the QNN matched by the M-TTFS construction above and
prove their layer-wise equivalence. Let
\(\bar{x}_{q,i}^{\ell-1}
=\alpha^{\ell-1}\bar{q}_i^{\ell-1}\)
denote the dead-zone-adjusted activation from the preceding layer.
Layer \(\ell\) computes
\begin{align}
    a_j^\ell
    &=
    b_j^\ell
    +
    \sum_i
    w_{ij}^\ell
    \bar{x}_{q,i}^{\ell-1},
\label{eq:qnn_pre_activation}\\
    q_j^\ell
    &=
    \clip
    \left(
    \left\lfloor
    \frac{a_j^\ell}{\alpha^\ell}
    \right\rfloor,
    c-(T-1),
    c
    \right),
\label{eq:qnn_code}
\end{align}
where \(a_j^\ell\) is the pre-activation and \(q_j^\ell\) is the
quantized integer before applying the dead zone.

The QNN applies the same dead zone as M-TTFS by mapping integer
values around \(\mu^\ell\) to \(\mu^\ell\):
\begin{equation}
    \bar{q}_j^\ell
    =
    \begin{cases}
        \mu^\ell,
        &
        \left|q_j^\ell-\mu^\ell\right|\leq k,
        \\[3pt]
        q_j^\ell,
        &
        \text{otherwise}.
    \end{cases}
\label{eq:qnn_deadzone}
\end{equation}
The activation passed to the next layer is
\begin{equation}
    \bar{x}_{q,j}^\ell
    =
    \alpha^\ell\bar{q}_j^\ell.
\label{eq:qnn_activation}
\end{equation}
When \(k=0\), Eq.~\eqref{eq:qnn_deadzone} leaves every quantized
integer unchanged. When \(k\geq1\), it maps a neighbourhood around
\(\mu^\ell\) to the value represented by silence.

Since $q_j^\ell=\clip(\lfloor a_j^\ell/\alpha^\ell\rfloor,
c-(T-1), c)$ implies $a_j^\ell \ge \theta^\ell(t)$ exactly when
$t \ge c - a_j^\ell/\alpha^\ell$, the earliest ordinary time in
Eq.~\eqref{eq:ordinary_event_times} is $c-q_j^\ell$ whenever
$q_j^\ell > c-(T-1)$, and the lowest code, which meets no
ordinary threshold, receives the terminal time
$T-1 = c-\bigl(c-(T-1)\bigr)$. Hence
\begin{equation}
    \tau(q)=c-q.
\label{eq:q_to_time}
\end{equation}
This mapping establishes a bijection between the \(T\) integer levels
and the \(T\) event times.
Using
\(
\mu^\ell=c-I_{\max}^\ell
\),
we obtain
\begin{equation}
    \left|
    \tau_j^\ell-I_{\max}^\ell
    \right|
    =
    \left|
    q_j^\ell-\mu^\ell
    \right|.
\label{eq:q_time_distance}
\end{equation}
Consequently,
\begin{equation}
    \tau_j^\ell\in\Omega_k^\ell
    \quad\Longleftrightarrow\quad
    \left|q_j^\ell-\mu^\ell\right|\leq k.
\label{eq:q_time_deadzone}
\end{equation}
Therefore, M-TTFS suppresses an event exactly when the matching QNN
maps its quantized integer to \(\mu^\ell\).

\begin{proposition}[M-TTFS--QNN equivalence]
\label{prop:mttfs_equivalence}
Under the layer-sequential accumulate-then-fire schedule, and
assuming that network inputs use the same quantization and event-time
encoding, M-TTFS with radius \(k\geq0\) implements the same
layer-wise computation as the QNN defined by
Eqs.~\eqref{eq:qnn_pre_activation}--\eqref{eq:qnn_activation}.
\end{proposition}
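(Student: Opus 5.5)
I will argue by induction on the layer index \(\ell\). The invariant is that the M-TTFS spike trains of layer \(\ell\), once decoded with the centred decay \(\widetilde{f}^\ell\) and combined with the folded bias \(\widetilde{b}^{\ell+1}\), reproduce exactly the downstream contribution of the QNN activation \(\bar{x}_{q}^{\ell}=\alpha^\ell\bar{q}^\ell\). Concretely, I want to show that \(V_j^{\ell+1}\) in Eq.~\eqref{eq:mttfs_membrane} equals \(a_j^{\ell+1}\) in Eq.~\eqref{eq:qnn_pre_activation}, and that the event time \(\tau_j^{\ell+1}\) equals \(c-q_j^{\ell+1}\).

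The base case follows from the hypothesis that network inputs use the same quantization and event-time encoding. The first spiking layer therefore receives spike trains obtained from the same integer codes that the QNN uses, with the same mask \(\Omega_k\) applied.

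For the inductive step, suppose layer \(\ell\) holds QNN integers \(q_i^\ell\) and event times \(\tau_i^\ell=c-q_i^\ell\), the latter by Eq.~\eqref{eq:q_to_time}. I will show \(\sum_t s_i^\ell(t)\widetilde{f}^\ell(t)=\alpha^\ell(\bar q_i^\ell-\mu^\ell)\) by splitting into two cases using Eq.~\eqref{eq:q_time_deadzone}.
\begin{itemize}
\item If \(|q_i^\ell-\mu^\ell|\le k\), the event is masked by Eq.~\eqref{eq:mttfs_mask}, so the sum is \(0\). This equals \(\alpha^\ell(\bar q_i^\ell-\mu^\ell)\), since \(\bar q_i^\ell=\mu^\ell\).
\item Otherwise exactly one event is sent, at \(\tau_i^\ell\). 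The sum is then \(\widetilde{f}^\ell(\tau_i^\ell)=\alpha^\ell(c-\tau_i^\ell-\mu^\ell)=\alpha^\ell(q_i^\ell-\mu^\ell)\), and here \(\bar q_i^\ell=q_i^\ell\).
\end{itemize}
Substituting into Eq.~\eqref{eq:mttfs_membrane} and expanding \(\widetilde b_j^{\ell+1}\) with Eq.~\eqref{eq:folded_bias}, the \(\alpha^\ell\mu^\ell\sum_i w_{ij}^{\ell+1}\) terms cancel. This leaves \(b_j^{\ell+1}+\sum_i w_{ij}^{\ell+1}\alpha^\ell\bar q_i^\ell=a_j^{\ell+1}\).

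Next I must show that thresholding this potential with the schedule of Eq.~\eqref{eq:threshold_schedule} produces \(\tau_j^{\ell+1}=c-q_j^{\ell+1}\), where \(q_j^{\ell+1}\) is the clipped floor of Eq.~\eqref{eq:qnn_code}. This is the step I expect to need the most care. The fact \(V\ge\alpha(c-t)\iff c-t\le V/\alpha\iff c-t\le\lfloor V/\alpha\rfloor\) holds because \(c-t\) is an integer. So the set \(\mathcal{T}_j^{\ell+1}\) is \(\{t\le T-2: t\ge c-\lfloor a/\alpha\rfloor\}\). Its minimum is \(\max(0,c-\lfloor a/\alpha\rfloor)\) when that value is at most \(T-2\), which is exactly \(c\) minus the upper-clipped code. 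When the set is empty, the floor lies at or below \(c-(T-1)\), the code clips to the lowest level, and the terminal time \(T-1\) matches. The upper-clip case \(\lfloor a/\alpha\rfloor>c\) gives \(t=0\), matching \(q=c\).

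The argument relies on the layer-sequential accumulate-then-fire schedule, so that the full \(V_j^{\ell+1}\) is available before any threshold comparison. With that, \(\tau^{\ell+1}\) matches the QNN code. The mask at layer \(\ell+1\) then matches the dead zone by Eq.~\eqref{eq:q_time_deadzone}, which closes the induction.
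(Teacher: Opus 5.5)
Your proposal is correct and follows essentially the same induction as the paper. It has the same base case from the shared input encoding, the same two-case evaluation $\sum_t s_i^\ell(t)\widetilde{f}^\ell(t)=\alpha^\ell(\bar q_i^\ell-\mu^\ell)$, the same cancellation against the folded bias giving $V_j^{\ell+1}=a_j^{\ell+1}$, and the same closing step via the threshold schedule and Eq.~\eqref{eq:q_time_deadzone}. Your floor-and-clipping argument for $\tau_j^{\ell+1}=c-q_j^{\ell+1}$ is just a more careful version of the mapping the paper states before the proposition in Eq.~\eqref{eq:q_to_time}.
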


\begin{proof}
The shared input quantization and event-time encoding establish the
base case. Suppose that layer \(\ell\) represents the QNN output
\(
\bar{x}_{q,i}^\ell
=
\alpha^\ell\bar{q}_i^\ell
\).

If
\(
|q_i^\ell-\mu^\ell|\leq k
\),
Eq.~\eqref{eq:q_time_deadzone} places the event inside
\(\Omega_k^\ell\). The event is suppressed, and
\[
    \sum_{t=0}^{T-1}
    s_i^\ell(t)\widetilde{f}^\ell(t)
    =
    0
    =
    \alpha^\ell
    \left(
    \bar{q}_i^\ell-\mu^\ell
    \right).
\]
Otherwise, one event is transmitted at
\(\tau_i^\ell=c-q_i^\ell\), giving
\[
\begin{aligned}
    \sum_{t=0}^{T-1}
    s_i^\ell(t)\widetilde{f}^\ell(t)
    &=
    \widetilde{f}^\ell(\tau_i^\ell)=
    \alpha^\ell
    \left(
    q_i^\ell-\mu^\ell
    \right) =
    \alpha^\ell
    \left(
    \bar{q}_i^\ell-\mu^\ell
    \right).
\end{aligned}
\]
Substituting this relation and the folded bias from
Eq.~\eqref{eq:folded_bias} into the next layer gives
\begin{align}
    V_j^{\ell+1}
    &=
    \widetilde{b}_j^{\ell+1}
    +
    \sum_i
    w_{ij}^{\ell+1}
    \alpha^\ell
    \left(
    \bar{q}_i^\ell-\mu^\ell
    \right)
    \nonumber\\
    &=
    b_j^{\ell+1}
    +
    \alpha^\ell\mu^\ell
    \sum_iw_{ij}^{\ell+1}
    \nonumber +
    \sum_i
    w_{ij}^{\ell+1}
    \alpha^\ell
    \left(
    \bar{q}_i^\ell-\mu^\ell
    \right)
    \nonumber\\
    &=
    b_j^{\ell+1}
    +
    \sum_i
    w_{ij}^{\ell+1}
    \bar{x}_{q,i}^\ell
    \nonumber\\
    &=
    a_j^{\ell+1}.
\label{eq:proof_membrane_equivalence}
\end{align}
The threshold schedule then assigns
\(q_j^{\ell+1}\) the event time
\(
\tau_j^{\ell+1}=c-q_j^{\ell+1}
\),
and Eq.~\eqref{eq:q_time_deadzone} applies exactly the QNN dead zone
in Eq.~\eqref{eq:qnn_deadzone}. Hence, the M-TTFS output represents
\(\bar{x}_{q,j}^{\ell+1}\), completing the induction. 
\end{proof}

When \(k=0\), Eq.~\eqref{eq:qnn_deadzone} is the identity, so core
M-TTFS is exactly equivalent to the original QNN and requires no
additional training. For \(k\geq1\), M-TTFS is exactly equivalent
to the dead-zone QNN defined above, which we train in
Section~\ref{sec:dz_training}.

\subsection{Training the Dead-Zone QNN}
\label{sec:dz_training}

Proposition~\ref{prop:mttfs_equivalence} shows that M-TTFS-DZ
implements the same computation as the corresponding dead-zone QNN.
We therefore perform all optimization in the QNN domain and convert
the trained model to M-TTFS afterward. Core M-TTFS (\(k=0\))
requires no additional training because
Eq.~\eqref{eq:qnn_deadzone} leaves every quantized integer
unchanged. For \(k\geq1\), we apply the dead-zone mapping after each
activation quantizer and fine-tune the resulting QNN.

For implementation, we define the projection mask
\begin{equation}
    M_{\mathrm{qnn},j}^\ell
    =
    \mathbb{I}
    \left(
    |q_j^\ell-\mu^\ell|>k
    \right).
\label{eq:qnn_mask}
\end{equation}
The projected integer can then be written as
\begin{equation}
    \bar{q}_j^\ell
    =
    M_{\mathrm{qnn},j}^\ell q_j^\ell
    +
    \left(
    1-M_{\mathrm{qnn},j}^\ell
    \right)\mu^\ell.
\label{eq:qnn_deadzone_mask}
\end{equation}

The quantizer and projection are non-differentiable. We use a masked
straight-through estimator:
\begin{equation}
\begin{split}
    \frac{\partial\mathcal L}
    {\partial a_j^\ell}
    \approx\;&
    \frac{\partial\mathcal L}
    {\partial\bar{x}_{q,j}^\ell}
    \,
    \mathbb{I}
    \left(
    q_{\min}
    \leq
    \frac{a_j^\ell}{\alpha^\ell}
    <
    q_{\max}+1
    \right)
    M_{\mathrm{qnn},j}^\ell.
\end{split}
\label{eq:masked_ste}
\end{equation}
For compactness, we write
$q_{\min}=c-(T-1)$ and $q_{\max}=c$.
The first indicator passes gradients through the unsaturated range
of the quantizer. The second blocks gradients for values collapsed
by the dead-zone projection, whose local output is constant with
respect to \(a_j^\ell\).

We fine-tune the projected QNN using knowledge distillation
\cite{liu2022bit}. The student minimizes
\begin{equation}
    \mathcal L
    =
    \mathcal L_{\mathrm{task}}
    +
    \lambda_{\mathrm{KD}}
    \mathcal L_{\mathrm{KD}},
\label{eq:training_loss}
\end{equation}
where \(\mathcal L_{\mathrm{task}}\) is the supervised task loss and
\(\mathcal L_{\mathrm{KD}}\) matches the student outputs to those of
its full-precision teacher. The teacher, loss definition,
\(\lambda_{\mathrm{KD}}\), and optimization schedule are specified
in the experimental setup.

The silent centres \(\{\mu^\ell\}\) and dead-zone radius \(k\) are
fixed during fine-tuning. We calibrate the centres using the
held-out set \(\mathcal C\) and select \(k\) according to the
validation quality--energy trade-off, without using test-set
statistics. After fine-tuning, the QNN is converted to M-TTFS using
the event-time mapping and fixed temporal masks defined in
Sections~\ref{sec:mttfs} and~\ref{sec:qnn_conversion}. No online
histogram estimation or adaptive centre selection is required
during inference.

\section{Results}

In this section, we evaluate on seven GLUE development sets (accuracy; Pearson correlation for STS-B; MNLI matched). Matterhorn is obtained by
converting a W1A4 QNN (1-bit weights, 4-bit activations, hence
$T=2^{4}=16$ time steps) distilled from a full-precision
BERT$_{\text{base}}$ teacher.
Eight quantized tensor
positions per encoder block are spike-encoded: the three attention
projection inputs, the query activation, the attention
probabilities, the attention-output input, and the two
feed-forward inputs. Key and value activations are not converted to spikes and are treated as weights~\cite{tangsorbet}.
To quantify energy consumption, we synthesized verilog implementations of key components on a commercial 22~nm process, measuring unit costs for operations, and per-bit energy for inter-core spike movement and SRAM access.


\subsection{GLUE Benchmark Performance}

As detailed in Table~\ref{tbl:glue_performance_avg}, our proposed M-TTFS-based Matterhorn establishes a state-of-the-art accuracy among spiking transformers, both with and without the dead zone constraint ($k=1$ and $k=0$). When compared to existing 1-bit baselines with comparable parameter counts, Matterhorn consistently outperforms the previous leading method, Spiking Otters~\cite{yan2025otters}. Specifically, with a dead zone radius of $k=0$, it achieves an average score of 85.87 (a 2.65-percentage-point improvement), and maintains a 1.42-point lead even under the stricter sparsity constraint of $k=1$ (84.64). On the challenging RTE task, Matterhorn ($k=0$) reaches 72.56\%, exceeding Spiking Otters by over 3.6 percentage points and approaching the performance of the full-precision BERT baseline. Furthermore, despite its reduced size, Matterhorn ($k=0$) surpasses larger spiking baselines, outperforming the 50M-parameter SpikingBERT (80.83\%) by 5.04 points. 

The $k{=}0$ row also reports the accuracy of the W1A4 source
QNN. Standard TTFS and M-TTFS ($k{=}0$) are bit-exact
conversions of this model
(Proposition~\ref{prop:mttfs_equivalence}), so all three share
the same 85.87 average. The reassignment contributes lower event
traffic, not higher accuracy, and the energy comparison in
Section~\ref{sec:rate_energy} is therefore at equal accuracy.
\begin{table*}[hbt]
\small
\centering
\caption{Performance comparison on the GLUE benchmark. All scores represent accuracy except for STS B which uses Pearson correlation. The asterisk symbol indicates that the model size was not reported in the original paper. Bold text highlights the best performance among SNN models, while an \underbar{\text{underbar}} denotes the second-best result.}
\label{tbl:glue_performance_avg}
\scalebox{0.8}{
\begin{tabular}{@{}lcccccccccc@{}}
\toprule
\textbf{Model} & \textbf{Size} & \textbf{QQP} & \textbf{MNLI-m} & \textbf{SST-2} & \textbf{QNLI} & \textbf{RTE} & \textbf{MRPC} & \textbf{STS-B} & \textbf{Average} \\
\midrule
$\text{BERT}_{\text{base}}$~\cite{devlin2019bert} & 418M & 91.3 & 84.7 & 93.3 & 91.7 & 72.6 & 88.2 & 89.4 & 87.31 \\
DistilBERT~\cite{sanh2019distilbert} & 207M & 88.5 & 82.2 & 91.3 & 89.2 & 59.9 & 87.5 & 86.9 & 83.64 \\
$\text{TinyBERT}_6$~\cite{jiao2020tinybert} & 207M & - & 84.6 & 93.1 & 90.4 & 70.0 & 87.3 & 83.7 & 84.85 \\
BiT~\cite{liu2022bit} & 13.4M & 82.9 & 77.1 & 87.7 & 85.7 & 58.8 & 79.7 & 71.1 & 77.57 \\
SpikingFormer~\cite{zhou2025spikingformerkeyfoundationmodel} & * & 84.7 & 71.9 & 87.2 & 76.0 & 55.6 & 79.7 & 54.5 & 72.80 \\
SpikingBERT~\cite{bal2024spikingbert} & 50M & 86.8 & 78.1 & 88.2 & 85.2 & 66.1 & 79.2 & 82.2 & 80.83 \\
SpikeLM~\cite{xingspikelm} & * & 87.9 & 76.0 & 86.5 & 84.9 & 65.3 & 78.7 & 84.3 & 80.51 \\
\midrule
1-bit SpikeLM~\cite{xingspikelm} & * & 87.2 & 74.9 & 86.6 & 84.5 & 65.7 & 78.9 & 83.9 & 80.24 \\
1-bit Spiking Sorbet~\cite{tangsorbet} & 13.4M & 86.5 & 77.3 & 90.4 & 86.1 & 60.3 & 79.9 & 78.1 & 79.80 \\

\text{1-bit Spiking Otters}~\cite{yan2025otters} & 13.4M & \text{87.67} & \text{78.50} & \text{91.28} & \text{86.42} & \text{68.95} & \text{84.56} & \text{85.19} & \text{83.22} \\
\textbf{1-bit Matterhorn(k=0)} & 13.4M & \textbf{89.55} & \textbf{81.81} & \textbf{92.55} & \textbf{89.55} & \textbf{72.56} & \textbf{88.24} & \textbf{86.82} & \textbf{85.87} \\
\textbf{1-bit Matterhorn(k=1)} & 13.4M & \underbar{88.32} & \underbar{80.70} & \underbar{91.63} & \underbar{87.74} & \underbar{71.84} & \underbar{86.27} & \underbar{86.00} & \underbar{84.64} \\
\bottomrule
\end{tabular}}
\end{table*}

\subsection{Spike Rate and Energy}
\label{sec:rate_energy}

\begin{table*}[t]
\centering
\caption{Per-step spike rate (\%) over spike-encoded tensors on
the seven GLUE development sets ($T{=}16$). $s$ is spikes per
activation per window ($s=\text{rate}\times T$). Averages are computed from the unrounded task-level traces.
}
\label{tab:spike_rate}
\small
\scalebox{0.8}{
\begin{tabular}{@{}lccccccccc@{}}
\toprule
Encoding & QQP & MNLI-m & SST-2 & QNLI & RTE & MRPC & STS-B & Avg. Spike Rate & $s$ \\
\midrule
TTFS             & 3.77 & 3.75 & 3.85 & 3.70 & 3.57 & 3.68 & 3.85 & 3.74 & 0.60 \\
M-TTFS ($k{=}0$) & 2.34 & 2.82 & 2.52 & 2.72 & 2.60 & 2.57 & 2.62 & 2.60 & 0.42 \\
M-TTFS ($k{=}1$) & 1.36 & 1.93 & 1.57 & 1.62 & 1.60 & 1.64 & 1.80 & 1.64 & 0.26 \\
\bottomrule
\end{tabular}}
\end{table*}

\begin{table}[t]
\centering
\caption{Energy per transformer block on SST-2 (batch of 64,
$S{=}128$). Baselines are re-evaluated under identical unit costs
and mapping assumptions using their reported spike rates and time
steps. Rel.\ normalizes to Matterhorn's dense INT4 source QNN,
which streams every quantized activation as a 4-bit integer
without event overhead. Bit-serial QNN rows skip activations mapped to zero by the
corresponding dead-zone projection and process only active
binary bits of the remaining values. M-TTFS rows
transmit one payload-free event per spike (address overhead
included), with the magnitude encoded in its timing. Component
breakdowns and unit costs in Appendix~C.}
\label{tab:energy}
\small
\scalebox{0.7}{
\begin{tabular}{@{}llcccc@{}}
\toprule
Model & Encoding & $T$ & \shortstack{Rate\\per step} & \shortstack{Energy\\(mJ)} & Rel. \\
\midrule
SpikingBERT & rate & 16 & 25\%  & 750.8 & $10.50\times$ \\
SpikingLM   & rate & 4  & 33\%  & 247.4 & $3.46\times$ \\
Sorbet      & rate & 16 & 13\%  & 390.5 & $5.46\times$ \\
Otters      & TTFS & 15 & 5.14\% & 147.1 & $2.06\times$ \\
\midrule
\multirow{6}{*}{Matterhorn}
 & source QNN (dense)                & n/a & n/a & 71.54 & $1.00\times$ \\
 & bit-serial QNN ($k{=}0$ pattern)  & n/a & n/a & 201.6 & $2.82\times$ \\
 & bit-serial QNN ($k{=}1$ pattern)  & n/a & n/a & 119.4 & $1.67\times$ \\
 & TTFS                              & 16 & 3.85\% & 117.5 & $1.64\times$ \\
 & M-TTFS ($k{=}0$)                  & 16 & 2.52\% & 77.19 & $1.08\times$ \\
 & M-TTFS ($k{=}1$)                  & 16 & 1.57\% & 48.27 & $0.67\times$ \\
\bottomrule
\end{tabular}}
\end{table}

Tables~\ref{tab:spike_rate} and~\ref{tab:energy} report the spike rate and energy consumption. Standard TTFS fires for almost every activation on symmetric tensors because only $0.19\%$ of activations use its silent codeword. Its average per-step spike rate therefore remains $3.74\%$ ($s{=}0.60$). Reassigning silence to zero ($k{=}0$) removes $31\%$ of all events with no accuracy loss. Expanding the dead zone to $k{=}1$ further reduces the rate to $1.64\%$ per step ($s{=}0.26$), which is $25\%$ below the crossover threshold. It is the only evaluated encoding below this threshold, both on average and on every individual task (Table~\ref{tab:spike_rate}).

On SST-2, standard TTFS consumes $64\%$ more energy than its source QNN (Table~\ref{tab:energy}). Reassigning silence reduces this gap to $8\%$ at $k{=}0$, while $k{=}1$ lowers energy below the QNN: $48.27$ versus $71.54$\,mJ, giving a $1.48\times$ energy advantage over the dense INT4 network from which it was converted. It uses $67.2\%$ less energy than Spiking Otters.

Beyond dense streaming, we evaluate an event-stream
bit-serial QNN inspired by Stripes~\cite{judd2016stripes}
and Neural Cache~\cite{eckert2018neural}. Activations mapped
to zero are skipped, and each active bit of the remaining
signed code emits one routed event, with the bit position
encoded by the execution cycle. The bit-serial QNN and
M-TTFS use the same source-address cost. Under the $k{=}1$ pattern, the event-stream bit-serial QNN
consumes $119.4$\,mJ, compared with $48.27$\,mJ for
M-TTFS. Its traffic scales with the codeword Hamming weight,
whereas M-TTFS uses at most one routed event per non-silent
value. This comparison isolates the benefit of temporal
magnitude coding under the evaluated event-stream mapping.

\begin{figure}[htbp]
    \centering
    \includegraphics[width=0.9\linewidth]{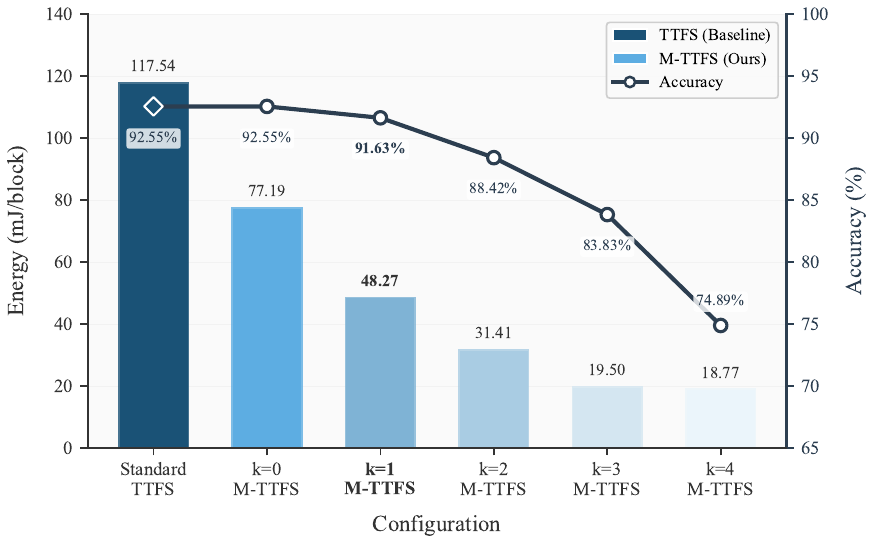}
    \caption{Energy-Accuracy Trade-off Analysis.}
    \label{fig:tradeoff}
\end{figure}

\subsection{Analysis and Ablations}
\label{sec:ablations}

\subsubsection{Zero Is Where Silence Belongs}
\label{sec:imax}

\begin{table}[t]
\centering
\caption{Proportion of the most frequent quantized value at each spike-encoded position on SST-2. In the source QNN, $q{=}0$ is the unique most frequent value at every position. Full spike-time distributions are shown in Appendix~\ref{app:mode_overview}.}
\label{tab:silent_mass}
\small
\begin{tabular}{@{}llc@{}}
\toprule
Position & Family & $p(q{=}0)$ \\
\midrule
Q-proj.\ input      & sym  & 0.264 \\
K-proj.\ input      & sym  & 0.264 \\
V-proj.\ input      & sym  & 0.293 \\
Query activation    & sym  & 0.200 \\
Attn.\ probability  & asym & 0.807 \\
Attn.-out.\ input   & sym  & 0.878 \\
FFN-expand input    & sym  & 0.330 \\
FFN-contract input  & asym & 0.911 \\
\bottomrule
\end{tabular}
\end{table}

Proposition~\ref{prop:optimal_mask} shows that the best silent codeword is the most common code. Table~\ref{tab:silent_mass} shows that this code is always zero. After combining each tensor position across the twelve encoder blocks, zero is the most common quantized value for every spike-encoded position on every task: all $56/56$ position-task pairs, with no exceptions. Depending on the position, zero covers $0.20$ to $0.91$ of all activations. Thus, the silent step of M-TTFS ($t{=}7$ for symmetric tensors and $t{=}15$ for asymmetric tensors) lies at the peak of every spike-time distribution.

The picture holds even below the position level. Across the
$8\times12{=}96$ per-block histograms on SST-2, 73 peak exactly
at zero and every one of the 23 exceptions peaks at the adjacent
code $-1$, a near tie that aggregation resolves to zero. Zero
therefore remains the shared global centre without any layerwise
calibration, and for $k{\ge}1$ the dead zone contains both codes,
so the tie is immaterial to the mask.
Appendix~\ref{app:mode_overview} shows the full distributions
and the per-block matrix.

\subsubsection{Dead-Zone Accuracy--Energy Trade-off}
\label{sec:dial}

Widening the dead zone yields a monotonic trade-off between accuracy and silence (Figure~\ref{fig:tradeoff}). Increasing $k$ from $0$ to $1$ reduces the GLUE average by only $1.23$ points while lowering energy by $37\%$. The full SST-2 sweep shows the same trend: as $k$ increases from $0$ to $4$, accuracy drops from $92.55$ to $74.89$, while the per-step spike rate decreases from $2.52$ to $0.60$. The best trade-off occurs at $k{=}1$; larger dead zones provide diminishing reductions in spike rate at a rapidly increasing accuracy cost. 

\subsubsection{Robustness to Sparse spike movement energy}
\label{sec:sensitivity}

Whether M-TTFS is more efficient than the dense QNN depends mainly on the cost of each routed sparse event. In the main results, we use $3$\,pJ per event based on Loihi~\cite{davies2018loihi}. This equals the cost of moving twelve dense bits and includes routing and address overhead, making it a conservative upper bound. We use the cost of one dense bit, $0.25$\,pJ, as the lower bound because an event cannot cost less than the bit it replaces. Even under this conservative setting, M-TTFS with $k{=}1$ remains more energy efficient than the dense QNN.

However, specialized SNN accelerators, such as LoAS~\cite{yin2024loas} and SpikeX~\cite{xu2025spikex}, reduce the cost of sparse-event movement. To cover these designs, Figure~\ref{fig:sensitivity} sweeps the evaluated range of sparse-event movement costs and shows that M-TTFS remains effective across all settings. M-TTFS with $k{=}1$ remains more efficient than the dense QNN at every event cost, ranging from a $10\times$ advantage at the lower bound to a $1.48\times$ advantage at the upper bound. The lossless setting, $k{=}0$, is more efficient for event costs below $2.77$\,pJ, covering over $90\%$ of the tested range, and is only $8\%$ less efficient at the upper bound.
\begin{figure}[t]
\centering
\includegraphics[width=0.85\linewidth]{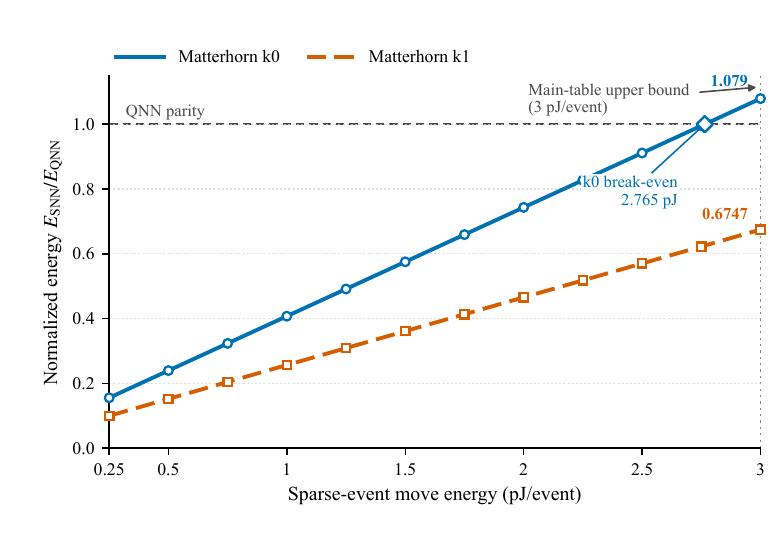}
\caption{Energy relative to the source QNN as the sparse-event
movement cost varies from $0.25$ to $3$\,pJ.
}
\label{fig:sensitivity}
\end{figure}

\subsection{Extended results for Larger model}

Finally, we apply M-TTFS to spiking LLaMA models from 7B to 70B
converted from 4-bit-weight, 4-bit-activation
QNNs~\cite{wang2026kirinimprovingannefficiency}
(Table~\ref{tab:llm}). The premise transfers before the method
does: zero is the most frequent code at every spike-encoded
position and every scale, so the same global $\mu^{\ell}{=}0$
rule applies with no per-layer or per-scale search (per-position
distributions in Appendix~\ref{app:llm}). Unlike the BERT track,
no fine-tuning is used at any radius: every row of
Table~\ref{tab:llm} is a training-free post-training conversion,
which is what allows the method to reach 70B parameters at low
cost. TTFS and $k{=}0$ are bit-exact to the source QNN; $k{=}0$
already removes 16\% to 17\% of all events for free, and $k{=}1$
removes 42\% to 44\% at a perplexity cost that shrinks with
scale, from $0.36$ at 7B to $0.24$ at 70B on Wikitext-2. Movement energy falls
in proportion to the rate, consistently across all three scales.
Pushing to $k{=}2$ without the fine-tuning repair used on BERT
degrades perplexity sharply, though less so at larger scales,
mirroring Section~\ref{sec:dial}: the dead zone remains a dial,
and training is the price of turning it far.

\begin{table}[t]
\centering
\caption{Spiking LLaMA at three scales. TTFS and M-TTFS
($k{=}0$) are both lossless and bit-exact to the 4-bit source
QNN, hence identical perplexity; all rows are training-free
post-training conversions. The spike rate is averaged over
WikiText-2 and C4; rate reduction is relative to TTFS and computed from unrounded spike rates.}
\label{tab:llm}
\small
\scalebox{0.8}{
\begin{tabular}{@{}llcccc@{}}
\toprule
Model & Encoding & \shortstack{WikiText-2\\PPL} & \shortstack{C4\\PPL} & \shortstack{Spike Rate\\per step} & \shortstack{Rate\\reduction} \\
\midrule
\multirow{4}{*}{7B}  & TTFS    & 5.74  & 7.27  & 6.18\% & n/a \\
                     & $k{=}0$ & 5.74  & 7.27  & 5.19\% & 16.0\% \\
                     & $k{=}1$ & 6.10  & 7.66  & 3.57\% & 42.3\% \\
                     & $k{=}2$ & 13.25 & 16.17 & 2.33\% & 62.3\% \\
\midrule
\multirow{4}{*}{13B} & TTFS    & 5.04  & 6.64  & 6.18\% & n/a \\
                     & $k{=}0$ & 5.04  & 6.64  & 5.20\% & 15.9\% \\
                     & $k{=}1$ & 5.32  & 6.91  & 3.58\% & 42.1\% \\
                     & $k{=}2$ & 8.94  & 11.49 & 2.34\% & 62.1\% \\
\midrule
\multirow{4}{*}{70B} & TTFS    & 3.49  & 5.65  & 6.17\% & n/a \\
                     & $k{=}0$ & 3.49  & 5.65  & 5.10\% & 17.4\% \\
                     & $k{=}1$ & 3.73  & 5.82  & 3.48\% & 43.7\% \\
                     & $k{=}2$ & 5.40  & 7.49  & 2.30\% & 62.9\% \\
\bottomrule
\end{tabular}}
\end{table}

\section{Conclusion}
In this paper, we introduced Matterhorn, a high-performance
spiking transformer designed to overcome the energy inefficiency caused by data movement in SNNs. The energy advantage of SNNs is not automatic. Once data
movement is charged, it must be earned by the encoding. This
paper showed how to earn it. M-TTFS reassigns the only cost-free
codeword, silence, to the most frequent activation value. This
assignment minimizes the expected spike count and preserves the source
quantized network exactly.
A dead-zone extension trades a controlled amount of accuracy for
further sparsity. Built on this encoding, Matterhorn sets a new
SOTA for spiking transformers on GLUE in terms of accuracy and energy. The same rule further transfers to spiking LLaMA from
7B to 70B parameters. Spikes are not cheap
because they are sparse. They are cheap only when silence sits
where the data actually lives.

\newpage

\bibliography{aaai2027}

@article{yan2024reconsidering,
  title={Reconsidering the energy efficiency of spiking neural networks},
  author={Yan, Zhanglu and Bai, Zhenyu and Wong, Weng-Fai},
  journal={arXiv preprint arXiv:2409.08290},
  year={2024}
}

@article{wang2025energy,
  title={Energy-Efficient and Dequantization-Free {Q-LLM}s: A Spiking Neural Network Approach to Salient Value Mitigation},
  author={Wang, Chenyu and Yan, Zhanglu and Zhou, Zhi and Chen, Xu and Wong, Weng-Fai},
  journal={arXiv preprint arXiv:2510.19498},
  year={2025}
}

@String{Computing = "Computing" }

@String{Computer = "{IEEE} Computer" }

@inproceedings{
zhao2025ttfsformer,
title={{TTFSF}ormer: A {TTFS}-based Lossless Conversion of Spiking Transformer},
author={Lusen Zhao and Zihan Huang and Jianhao Ding and Zhaofei Yu},
booktitle={Forty-second International Conference on Machine Learning},
year={2025},
url={https://openreview.net/forum?id=mJAa823xKu}
}

@inproceedings{wei2023temporal,
  title={Temporal-coded spiking neural networks with dynamic firing threshold: Learning with event-driven backpropagation},
  author={Wei, Wenjie and Zhang, Malu and Qu, Hong and Belatreche, Ammar and Zhang, Jian and Chen, Hong},
  booktitle={Proceedings of the IEEE/CVF international conference on computer vision},
  pages={10552--10562},
  year={2023}
}

@article{davies2018loihi,
  title={Loihi: A neuromorphic manycore processor with on-chip learning},
  author={Davies, Mike and Srinivasa, Narayan and Lin, Tsung-Han and Chinya, Gautham and Cao, Yongqiang and Choday, Sri Harsha and Dimou, Georgios and Joshi, Prasad and Imam, Nabil and Jain, Shweta and others},
  journal={Ieee Micro},
  volume={38},
  number={1},
  pages={82--99},
  year={2018},
  publisher={IEEE}
}

@inproceedings{bal2024spikingbert,
  title={Spikingbert: Distilling bert to train spiking language models using implicit differentiation},
  author={Bal, Malyaban and Sengupta, Abhronil},
  booktitle={Proceedings of the AAAI conference on artificial intelligence},
  volume={38},
  number={10},
  pages={10998--11006},
  year={2024}
}

@inproceedings{judd2016stripes,
  title={Stripes: Bit-serial deep neural network computing},
  author={Judd, Patrick and Albericio, Jorge and Hetherington, Tayler and Aamodt, Tor M and Moshovos, Andreas},
  booktitle={2016 49th Annual IEEE/ACM International Symposium on Microarchitecture (MICRO)},
  pages={1--12},
  year={2016},
  organization={IEEE}
}

@inproceedings{eckert2018neural,
  title={Neural cache: Bit-serial in-cache acceleration of deep neural networks},
  author={Eckert, Charles and Wang, Xiaowei and Wang, Jingcheng and Subramaniyan, Arun and Iyer, Ravi and Sylvester, Dennis and Blaaauw, David and Das, Reetuparna},
  booktitle={2018 ACM/IEEE 45Th annual international symposium on computer architecture (ISCA)},
  pages={383--396},
  year={2018},
  organization={IEEE}
}

@misc{wang2026kirinimprovingannefficiency,
      title={Kirin: Improving ANN efficiency with SNN Hybridization}, 
      author={Chenyu Wang and Zhanglu Yan and Zhi Zhou and Xu Chen and Weng-Fai Wong},
      year={2026},
      eprint={2602.08817},
      archivePrefix={arXiv},
      primaryClass={cs.LG},
      url={https://arxiv.org/abs/2602.08817}, 
}

@inproceedings{yin2024loas,
  title={Loas: Fully temporal-parallel dataflow for dual-sparse spiking neural networks},
  author={Yin, Ruokai and Kim, Youngeun and Wu, Di and Panda, Priyadarshini},
  booktitle={2024 57th IEEE/ACM International Symposium on Microarchitecture (MICRO)},
  pages={1107--1121},
  year={2024},
  organization={IEEE}
}

@article{xu2025spikex,
  title={Spikex: Exploring accelerator architecture and network-hardware co-optimization for sparse spiking neural networks},
  author={Xu, Boxun and Boone, Richard and Li, Peng},
  journal={IEEE Transactions on Computer-Aided Design of Integrated Circuits and Systems},
  year={2025},
  publisher={IEEE}
}

@inproceedings{devlin2019bert,
  title={Bert: Pre-training of deep bidirectional transformers for language understanding},
  author={Devlin, Jacob and Chang, Ming-Wei and Lee, Kenton and Toutanova, Kristina},
  booktitle={Proceedings of the 2019 conference of the North American chapter of the association for computational linguistics: human language technologies, volume 1 (long and short papers)},
  pages={4171--4186},
  year={2019}
}

@article{liu2022bit,
  title={Bit: Robustly binarized multi-distilled transformer},
  author={Liu, Zechun and Oguz, Barlas and Pappu, Aasish and Xiao, Lin and Yih, Scott and Li, Meng and Krishnamoorthi, Raghuraman and Mehdad, Yashar},
  journal={Advances in neural information processing systems},
  volume={35},
  pages={14303--14316},
  year={2022}
}

@inproceedings{jiao2020tinybert,
  title={TinyBERT: Distilling BERT for Natural Language Understanding},
  author={Jiao, Xiaoqi and Yin, Yichun and Shang, Lifeng and Jiang, Xin and Chen, Xiao and Li, Linlin and Wang, Fang and Liu, Qun},
  booktitle={Findings of the Association for Computational Linguistics: EMNLP 2020},
  pages={4163--4174},
  year={2020}
}

@article{sanh2019distilbert,
  title={DistilBERT, a distilled version of BERT: smaller, faster, cheaper and lighter},
  author={Sanh, Victor and Debut, Lysandre and Chaumond, Julien and Wolf, Thomas},
  journal={arXiv preprint arXiv:1910.01108},
  year={2019}
}

@article{bai2025data,
  title={A Data-Driven Dynamic Execution Orchestration Architecture},
  author={Bai, Zhenyu and Dangi, Pranav and Juneja, Rohan and Li, Zhaoying and Yan, Zhanglu and Lan, Huiying and Mitra, Tulika},
  journal={International Conference on Architectural Support for Programming Languages and Operating Systems (ASPLOS)},
  year={2026},
}

@inproceedings{prabhakar2022sambanova,
  title={SambaNova SN10 RDU: A 7nm dataflow architecture to accelerate software 2.0},
  author={Prabhakar, Raghu and Jairath, Sumti and Shin, Jinuk Luke},
  booktitle={2022 IEEE International Solid-State Circuits Conference (ISSCC)},
  volume={65},
  pages={350--352},
  year={2022},
  organization={IEEE}
}

@inproceedings{xingspikelm,
  title={SpikeLM: Towards General Spike-Driven Language Modeling via Elastic Bi-Spiking Mechanisms},
  author={Xing, Xingrun and Zhang, Zheng and Ni, Ziyi and Xiao, Shitao and Ju, Yiming and Fan, Siqi and Wang, Yequan and Zhang, Jiajun and Li, Guoqi},
  booktitle={Forty-first International Conference on Machine Learning},
year = {2024}
}

@article{xing2024spikellm,
  title={Spikellm: Scaling up spiking neural network to large language models via saliency-based spiking},
  author={Xing, Xingrun and Gao, Boyan and Zhang, Zheng and Clifton, David A and Xiao, Shitao and Du, Li and Li, Guoqi and Zhang, Jiajun},
  journal={arXiv preprint arXiv:2407.04752},
  year={2024}
}

@inproceedings{tangsorbet,
  title={Sorbet: A Neuromorphic Hardware-Compatible Transformer-Based Spiking Language Model},
  author={Tang, Kaiwen and Yan, Zhanglu and Wong, Weng-Fai},
  booktitle={Forty-second International Conference on Machine Learning},
year={2025}
}

@misc{zhou2025spikingformerkeyfoundationmodel,
      title={Spikingformer: A Key Foundation Model for Spiking Neural Networks}, 
      author={Chenlin Zhou and Liutao Yu and Zhaokun Zhou and Han Zhang and Jiaqi Wang and Huihui Zhou and Zhengyu Ma and Yonghong Tian},
      year={2025},
      eprint={2304.11954},
      archivePrefix={arXiv},
      primaryClass={cs.NE},
      url={https://arxiv.org/abs/2304.11954}, 
}

@inproceedings{
yan2025otters,
title={Otters: An Energy-Efficient Spiking Transformer via Optical Time-to-First-Spike Encoding},
author={Zhanglu Yan and Jiayi Mao and Qianhui Liu and Fanfan Li and Tao Luo and Gang Pan and Bowen Zhu and Weng-Fai Wong},
booktitle={The Fourteenth International Conference on Learning Representations},
year={2026},
url={https://openreview.net/forum?id=oK0ISeb5Dw}
}

@article{sakemi2023sparse,
  title   = {Sparse-firing regularization methods for spiking neural networks with time-to-first-spike coding},
  author  = {Sakemi, Yusuke and Yamamoto, Kakei and Hosomi, Takeo and Aihara, Kazuyuki},
  journal = {Scientific Reports},
  volume  = {13},
  number  = {1},
  pages   = {22897},
  year    = {2023},
  doi     = {10.1038/s41598-023-50201-5}
}

@inproceedings{sun2025temporal,
author = {Sun, Qian and Lu, Chengzhuo and Chen, Wenyu and Wei, Wenjie and Wang, Jingya and Zhang, Jieyuan and Liu, Xiaoli and Ye, Yalan and Yang, Yang and Zhang, Malu},
title = {Temporal-coded Spiking Transformer},
year = {2025},
doi = {10.1145/3746027.3754545},
booktitle = {Proceedings of the 33rd ACM International Conference on Multimedia},
pages = {2616–2624},
numpages = {9},
}

@inproceedings{lew2022ttfstraining,
author = {Lew, Dongwoo and Lee, Kyungchul and Park, Jongsun},
title = {A time-to-first-spike coding and conversion aware training for energy-efficient deep spiking neural network processor design},
year = {2022},
doi = {10.1145/3489517.3530457},
booktitle = {Proceedings of the 59th ACM/IEEE Design Automation Conference},
pages = {265–270},
numpages = {6},
}

\newpage
\appendix

\newpage
\appendix

\setcounter{topnumber}{4}
\setcounter{bottomnumber}{3}
\setcounter{totalnumber}{6}
\setcounter{dbltopnumber}{4}
\renewcommand{\topfraction}{0.94}
\renewcommand{\bottomfraction}{0.94}
\renewcommand{\textfraction}{0.06}
\renewcommand{\floatpagefraction}{0.90}
\renewcommand{\dbltopfraction}{0.94}
\renewcommand{\dblfloatpagefraction}{0.90}
\makeatletter
\floatsep=10pt plus 3pt minus 2pt
\textfloatsep=12pt plus 3pt minus 2pt
\dblfloatsep=10pt plus 3pt minus 2pt
\dbltextfloatsep=12pt plus 3pt minus 2pt
\@fptop=0pt      \@fpsep=12pt     \@fpbot=0pt plus 1fil
\@dblfptop=0pt   \@dblfpsep=12pt  \@dblfpbot=0pt plus 1fil
\makeatother
\captionsetup{skip=5pt}

\section{Full Spike-Time Distributions and Mode Statistics}
\label{app:mode_overview}

This appendix reports the complete distributional evidence behind
Section~\ref{sec:imax}.

\begin{table*}[t]
\centering
\caption{Mode location $\hat q$ for all eleven tensor positions
on all seven tasks (source QNN).}
\label{tab:mode_overview}
\small
\setlength{\tabcolsep}{3.5pt}
\begin{tabular}{@{}lcccccccc@{}}
\toprule
Position & Fam. & QQP & MNLI & SST-2 & QNLI & RTE & MRPC & STS-B \\
\midrule
Q-proj.\ input     & sym  & 0 & 0 & 0 & 0 & 0 & 0 & 0 \\
K-proj.\ input     & sym  & 0 & 0 & 0 & 0 & 0 & 0 & 0 \\
V-proj.\ input     & sym  & 0 & 0 & 0 & 0 & 0 & 0 & 0 \\
Query act.         & sym  & 0 & 0 & 0 & 0 & 0 & 0 & 0 \\
Attn.\ prob.       & asym & 0 & 0 & 0 & 0 & 0 & 0 & 0 \\
Attn.-out.\ input  & sym  & 0 & 0 & 0 & 0 & 0 & 0 & 0 \\
FFN-expand input   & sym  & 0 & 0 & 0 & 0 & 0 & 0 & 0 \\
FFN-contract input & asym & 0 & 0 & 0 & 0 & 0 & 0 & 0 \\
\bottomrule
\end{tabular}
\end{table*}

\begin{table*}[t]
\centering
\begin{minipage}[t]{0.44\textwidth}
\centering
\caption{Mode of every spike-encoded position after dead-zone
training or projection, per radius. All $80$ position and
configuration pairs stay inside their frozen dead zone. The only
modes outside a zone anywhere in the audit belong to the value
activation on QQP and STS-B, a tensor that is never spike-encoded
and therefore never masked.}
\label{tab:finetuned_modes}
\small
\begin{tabular}{@{}lll@{}}
\toprule
Radius & Positions with mode off zero & Inside zone \\
\midrule
$k{=}1$ (7 tasks) & none & $56/56$ \\
$k{=}2$ (SST-2)   & Q-proj., K-proj.\ at $-1$ & $8/8$ \\
$k{=}3$ (SST-2)   & none & $8/8$ \\
$k{=}4$ (SST-2)   & none & $8/8$ \\
\bottomrule
\end{tabular}
\end{minipage}\hfill
\begin{minipage}[t]{0.53\textwidth}
\centering
\caption{Per-block mode of each spike-encoded position on SST-2
(source QNN). $73$ of $96$ cells peak exactly at zero and all
$23$ deviations sit at the adjacent code $-1$, a systematic near
tie between $q{=}0$ and $q{=}-1$ in the deeper blocks of the
projection inputs. Aggregation over blocks resolves every
position to zero (Table~\ref{tab:mode_overview}), and for
$k{\ge}1$ both codes lie inside the dead zone, so the tie has no
effect on the mask.}
\label{tab:block_modes}
\small
\setlength{\tabcolsep}{1.6pt}
\begin{tabular}{@{}lcccccccccccc@{}}
\toprule
Position & 0 & 1 & 2 & 3 & 4 & 5 & 6 & 7 & 8 & 9 & 10 & 11 \\
\midrule
Q-proj.\ input     & 0 & 0 & 0 & $-1$ & $-1$ & $-1$ & $-1$ & $-1$ & $-1$ & $-1$ & $-1$ & $-1$ \\
K-proj.\ input     & 0 & 0 & $-1$ & $-1$ & $-1$ & $-1$ & $-1$ & $-1$ & $-1$ & $-1$ & $-1$ & $-1$ \\
V-proj.\ input     & 0 & 0 & 0 & 0 & 0 & 0 & 0 & 0 & $-1$ & $-1$ & 0 & $-1$ \\
Query act.         & 0 & 0 & 0 & 0 & 0 & 0 & 0 & 0 & 0 & 0 & 0 & $-1$ \\
Attn.\ prob.       & 0 & 0 & 0 & 0 & 0 & 0 & 0 & 0 & 0 & 0 & 0 & 0 \\
Attn.-out.\ input  & 0 & 0 & 0 & 0 & 0 & 0 & 0 & 0 & 0 & 0 & 0 & 0 \\
FFN-expand input   & 0 & 0 & 0 & 0 & 0 & 0 & 0 & 0 & 0 & 0 & 0 & 0 \\
FFN-contract input & 0 & 0 & 0 & 0 & 0 & 0 & 0 & 0 & 0 & 0 & 0 & 0 \\
\bottomrule
\end{tabular}
\end{minipage}
\end{table*}

\begin{figure*}[t]
\centering
\includegraphics[width=0.88\textwidth]{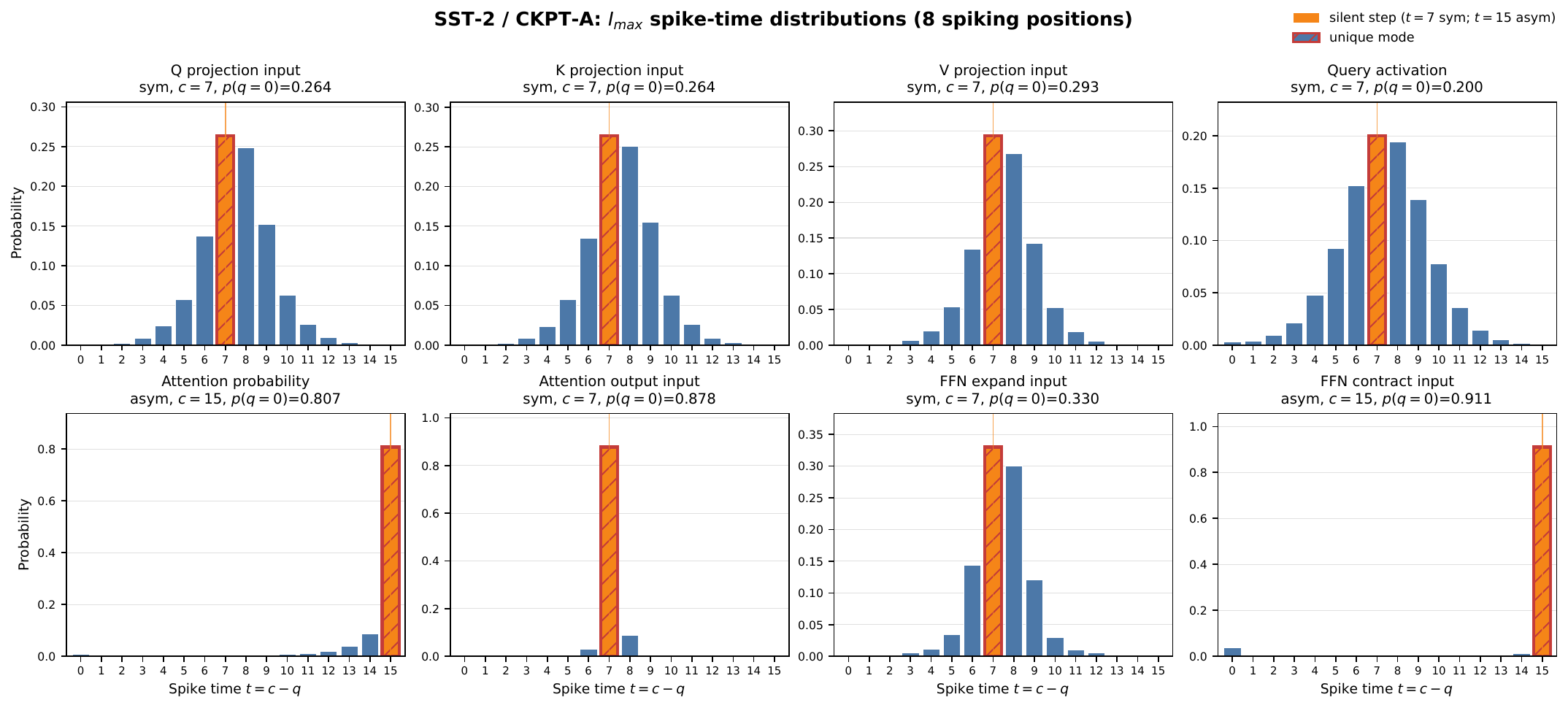}
\caption{Aggregated spike-time distributions of the eight
spike-encoded tensor positions (SST-2, source QNN), with event
time $t=c-q$ and $p(q{=}0)$ annotated per panel. The silent step
of M-TTFS (orange, $t{=}7$ for symmetric and $t{=}15$ for
asymmetric tensors) coincides with the unique mode (hatched) of
every position.}
\label{fig:position_dists}
\end{figure*}

\begin{figure*}[t]
\centering
\includegraphics[width=0.88\textwidth]{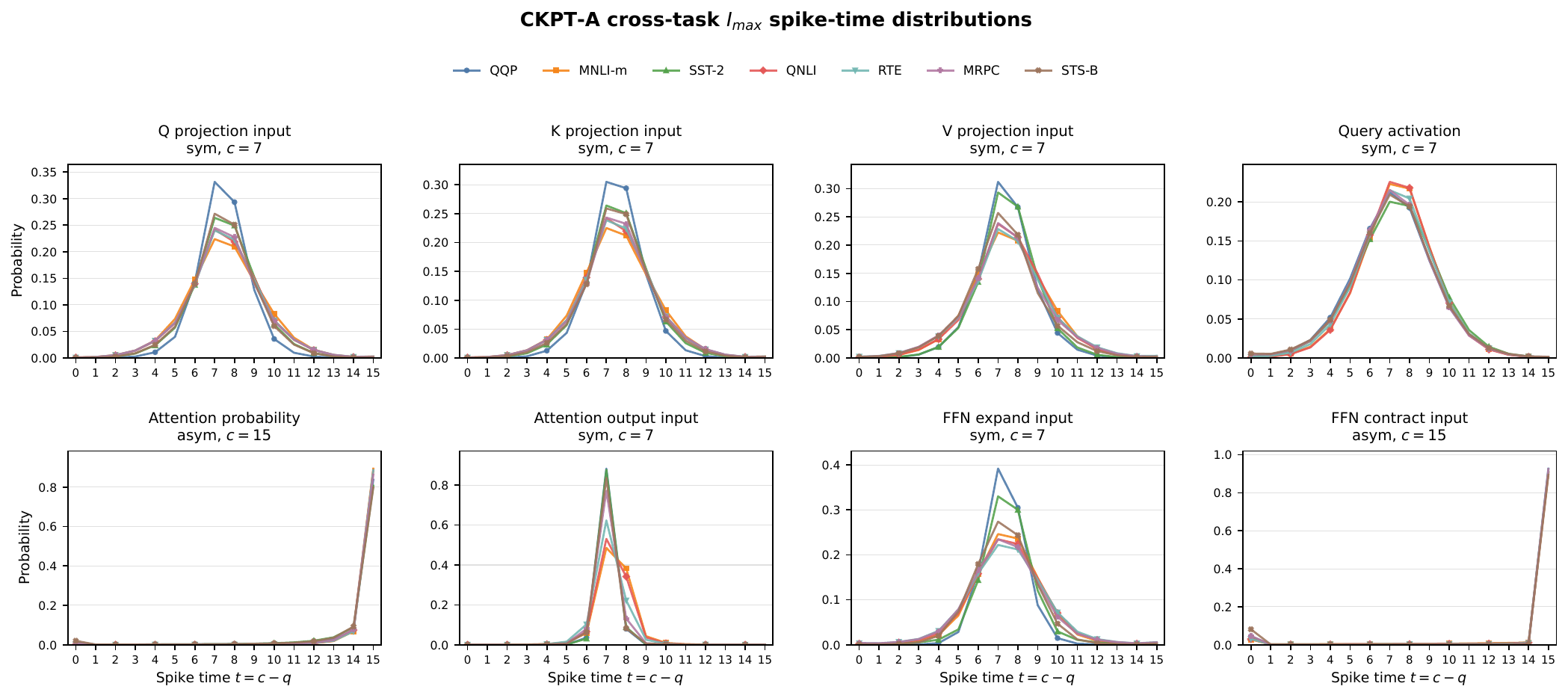}
\caption{The same eight positions with all seven GLUE tasks
overlaid. The shapes are nearly identical across tasks and every
curve peaks at the silent step. QQP is the sharpest on the
projection inputs, which is why it has the lowest spike rates in
Table~\ref{tab:spike_rate}.}
\label{fig:crosstask_overlay}
\end{figure*}

Three observations complete the picture. First, on the source
network the mode of every spike-encoded position is exactly zero
on every task, and every deviation in the full audit of
Table~\ref{tab:mode_overview} is confined to the three densely
transmitted positions. Second, the choice is stable under
training. Table~\ref{tab:finetuned_modes} audits every dead-zone
checkpoint and finds all $80$ spike-encoded modes inside their
frozen zones, with the $k{=}2$ drift of the projection inputs to
$-1$ illustrating that movement inside the zone is functionally
free, since the projection identifies all codes it covers. Third,
the finest granularity is honest about its limits.
Table~\ref{tab:block_modes} shows that individual blocks can
prefer the adjacent code $-1$, yet the position aggregate always
selects zero, and Section~\ref{sec:imax} shows that calibrating
at any finer granularity changes the spike rate by exactly zero
at the granularity the method uses.

\section{Spiking LLaMA Details}
\label{app:llm}

All large-model results use LLaMA-2 at 7B, 13B, and 70B,
converted from 4-bit-weight, 4-bit-activation
QNNs~\cite{wang2026kirinimprovingannefficiency}. The conversion
is pure post-training at every radius: the dead zone is applied
as a projection and no fine-tuning is performed anywhere in the
pipeline. 

Figure~\ref{fig:llm_dist} shows the aggregated quantized
activation distribution. It is peaked at $q{=}0$, nearly
symmetric, and scale invariant, with a maximum total-variation
distance below $2\%$ between any two model scales on either
corpus. Figure~\ref{fig:llm_granularity} refines this picture by
granularity. Zero is the unique peak at every level from the
whole network down to individual quantization positions, all at
$100\%$. At the finest level, individual attention heads, $96.5\%$
of the $28{,}544$ heads peak at zero and $99.5\%$ peak within one
code of zero. A single global mask at $I_{\max}{=}7$ therefore
loses essentially nothing to any finer-grained calibration.
Figure~\ref{fig:llm_bars} reports the full per-corpus results
behind Table~\ref{tab:llm}. The two corpora agree to within
$0.01$ points of spike rate at every configuration, which
justifies the averaged rates in the main table.

\begin{figure*}[t]
\centering
\includegraphics[width=0.76\textwidth]{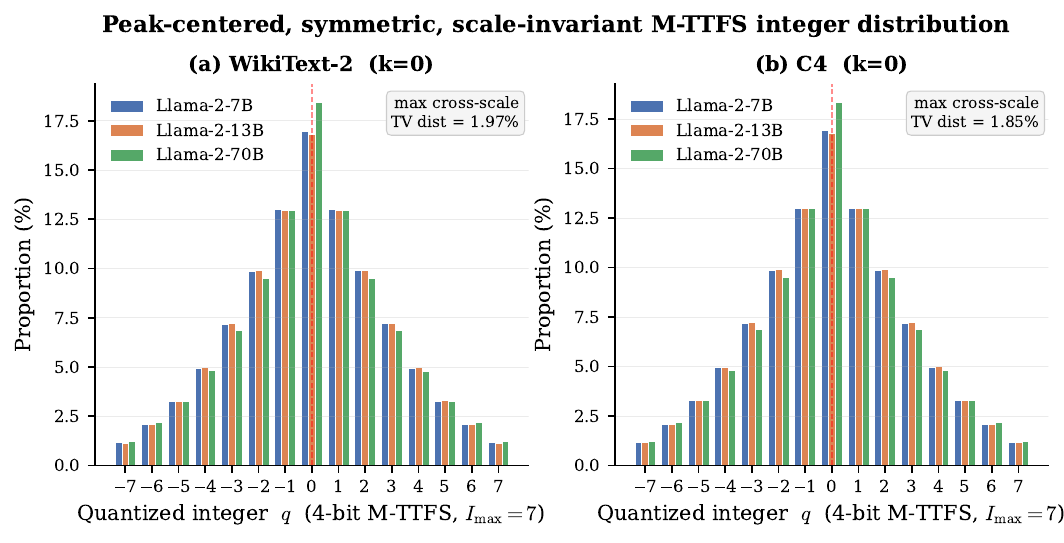}
\caption{Aggregated quantized activation distributions of spiking
LLaMA-2 at three scales on WikiText-2 and C4 ($k{=}0$). The peak
at $q{=}0$ corresponds to the silent step $t{=}I_{\max}{=}7$
under the bijection of Eq.~\eqref{eq:q_to_time}. The maximum
total-variation distance between scales is $1.97\%$ on WikiText-2
and $1.85\%$ on C4.}
\label{fig:llm_dist}
\end{figure*}

\begin{figure*}[t]
\centering
\includegraphics[width=0.86\textwidth]{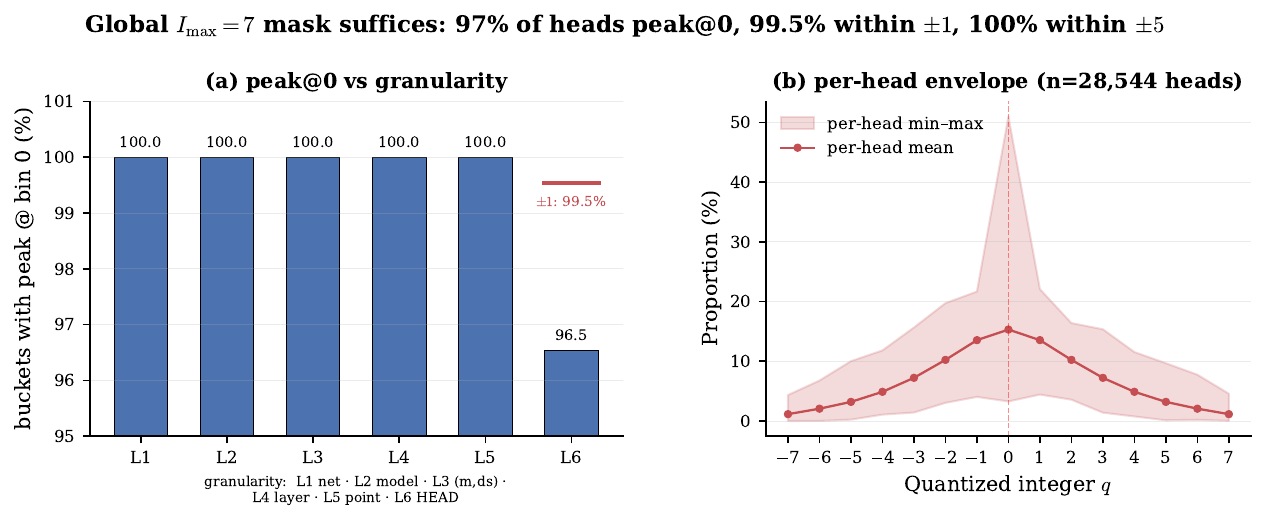}
\caption{Mode location by statistical granularity, pooled over
scales and corpora ((L1 whole network, L2 per scale, L3 per scale and corpus, L4 per
layer, L5 per quantization position, L6 per attention head)). Zero is the peak for $100\%$ of buckets at
every level from the whole network (L1) down to individual
quantization positions (L5). At the per-head level (L6,, $96.5\%$ of heads peak at zero and $99.5\%$ peak
within one code of zero. The right panel shows the per-head mean
distribution and its min-max envelope.}
\label{fig:llm_granularity}
\end{figure*}

\begin{figure*}[t]
\centering
\includegraphics[width=0.74\textwidth]{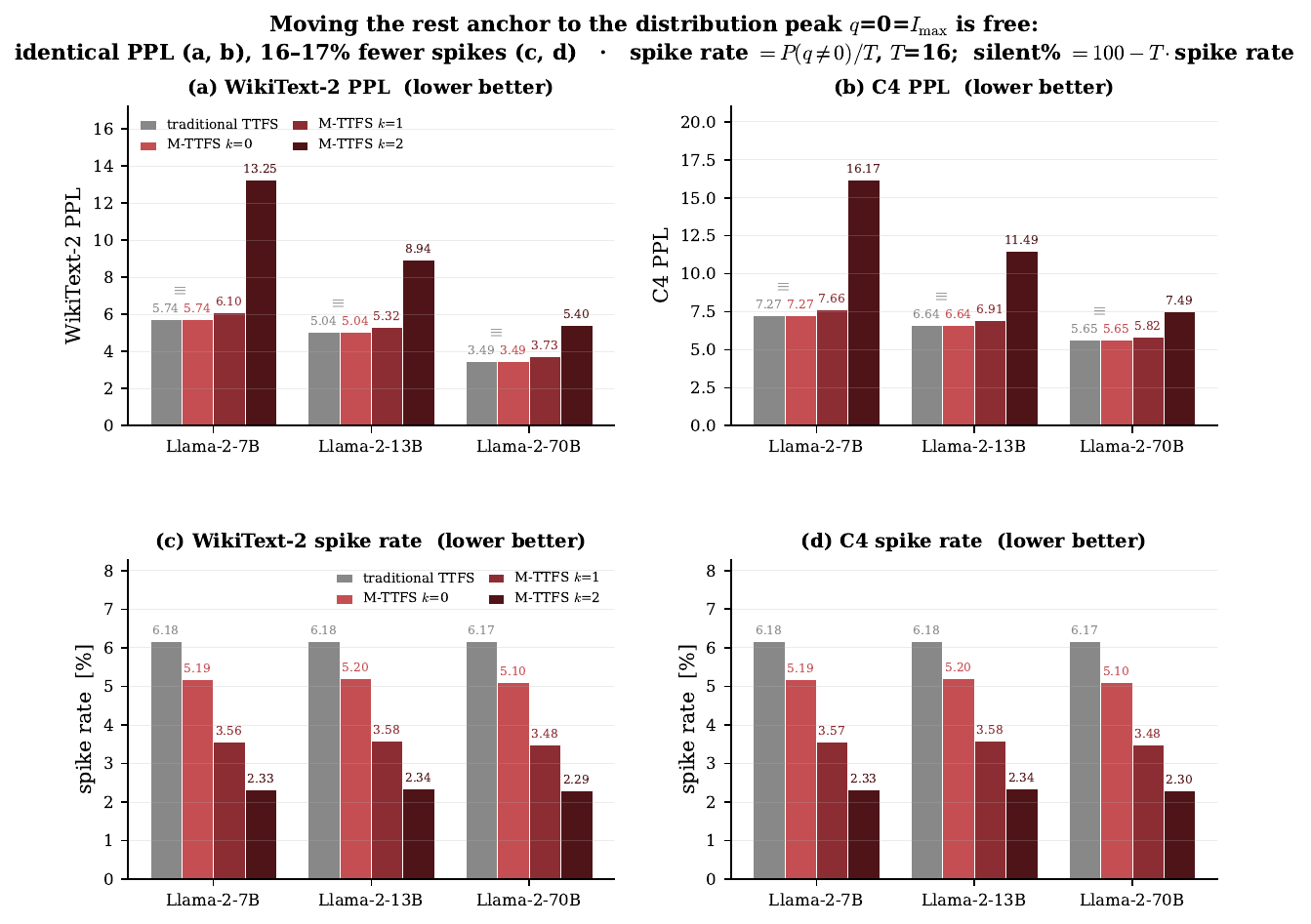}
\caption{Per-corpus perplexity and spike rate behind
Table~\ref{tab:llm}. Moving silence to the peak ($k{=}0$) leaves
perplexity bit-exact while removing $16$ to $17\%$ of spikes;
larger radii trade perplexity for further sparsity. WikiText-2
and C4 rates agree to within $0.01$ points everywhere.}
\label{fig:llm_bars}
\end{figure*}


\section{Energy Model and Unit Costs}
\label{app:energy}

All energies in this paper are analytical estimates for one
encoder block processing one batch of $B{=}64$ sequences at
$S{=}128$: operation and delivery counts are multiplied by the
per-op unit costs of Table~\ref{tab:appendix_c1_unit_costs},
with the architectural parameters of
Table~\ref{tab:appendix_c1_formula_parameters}. The scope is
digital only custom Matterhorn
hardware. Movement is charged per operand delivery at one hop
per delivery. The fanout counts have closed forms. The eight
spike-encoded positions contribute
$N_8 = 4BSC^2 + 2BhS^2d_k + 2BSCF = 5.96\times10^{10}$
deliveries per block. 

\textbf{Provenance.} Three sources supply the unit costs. The arithmetic we
synthesized ourselves: each compute constant
($E_{\mathrm{ACC}}$, $E_{\mathrm{MAC}}$, $E_{\mathrm{CMP}}$,
$E_{\mathrm{SUB}}$) is the per-operator result of synthesizing
that operator on a commercial 22\,nm standard-cell library with
Design Compiler at the TT corner, 0.80\,V, 25\,$^\circ$C, and
1\,GHz under vectorless switching activity, and
$E_{\mathrm{weight}}$ is the per-bit figure of a low-power
single-port SRAM macro characterized by the memory compiler at
the same corner. Dense movement we measured ourselves:
$\overline{E}_{\mathrm{move}} = 0.25$\,pJ per bit per hop comes
from our measurement of a circuit-switched NoC at 22\,nm. The
one number we did not produce is the event price.
$\widetilde{E}_{\mathrm{move}} = 3.0$\,pJ per event per hop is
Loihi's published per-hop routing energy (14\,nm, 0.75\,V,
pre-silicon, Table~2 of~\cite{davies2018loihi}), adopted without
rescaling as a deliberately conservative cross-platform anchor
rather than as a measurement of this design. The price is a
bundle. It already covers payload, address, and routing, so the
model commits to no physical address width, and the factor of 12
relative to a dense bit is a cost ratio, not a packet format.
Because this borrowed anchor is the one constant not grounded at
our node, it is exactly the constant that
Section~\ref{sec:sensitivity} sweeps, from the physical floor of
one dense bit, since an event cannot cost less than the bit it
replaces, up to the Loihi value itself. Every main-table energy
therefore already charges M-TTFS the least favourable price in
that range. The artifact includes the synthesis scripts and
power reports behind these numbers.

\textbf{Bit-serial QNN accounting.} The bit-serial rows follow
the execution principles of Stripes~\cite{judd2016stripes} and
Neural Cache~\cite{eckert2018neural}. Under a given mask the
design skips every zero-valued activation; each survivor then
transmits only the nonzero bits of its magnitude, one indexed
transfer per bit at the sparse-event price
$\widetilde{E}_{\mathrm{move}}$, and its weight reads and
compute are gated on the same survivors. This gives the QNN both
value-level and bit-level sparsity under exactly the masks
M-TTFS uses, and Table~\ref{tab:appendix_c2_breakdown} shows it
still does not reach dense streaming, let alone M-TTFS.

\begin{table*}[t]
\centering
\caption{Digital unit costs of the 22\,nm analytical model.}
\label{tab:appendix_c1_unit_costs}
\begingroup
\setlength{\tabcolsep}{3pt}
\scriptsize
\begin{tabular}{p{0.20\textwidth}p{0.13\textwidth}p{0.09\textwidth}p{0.12\textwidth}p{0.40\textwidth}}
\toprule
Item & Symbol & Value & Unit & Notes \\
\midrule
INT4 MAC, 1-bit W $\times$ 4-bit A & $E_{\mathrm{MAC4}}$ & 0.06634 & pJ/op & QNN and TTFS linear paths. \\
4-bit $\times$ 4-bit MAC & $E_{\mathrm{MAC4\times4}}$ & 0.0848 & pJ/op & QNN attention path. \\
ACC, 1-bit & $E_{\mathrm{ACC1}}$ & 0.04292 & pJ/op & Rate-coded SNN linear layers. \\
ACC, 2-bit & $E_{\mathrm{ACC2}}$ & 0.0477 & pJ/op & $T{=}4$ rate-coded attention path. \\
ACC, 4-bit & $E_{\mathrm{ACC4}}$ & 0.05021 & pJ/op & $T{=}16$ rate-coded attention path. \\
Comparator & $E_{\mathrm{CMP}}$ & 0.05021 & pJ/op & Threshold comparison. \\
Membrane subtraction & $E_{\mathrm{SUB}}$ & 0.05021 & pJ/op & Decay update. \\
INT4 clamp & $E_{\mathrm{clamp4}}$ & 0.05021 & pJ/op & QNN quantization clamp. \\
SRAM read or write & $E_{\mathrm{weight}}$ & 0.09845 & pJ/bit & One per-bit value for reads and writes. \\
Dense routed transfer & $\overline{E}_{\mathrm{move}}$ & 0.25 & pJ/bit$\cdot$hop & No event or address overhead. \\
Sparse-event transfer & $\widetilde{E}_{\mathrm{move}}$ & 3.0 & pJ/event$\cdot$hop & Bundles payload, address, and routing; equals 12 dense-bit equivalents; not multiplied by a width. \\
Digital TTFS encoding & $E_{\mathrm{enc}}$ & 0.0163 & pJ/event & Otters and Matterhorn TTFS rows. \\
\bottomrule
\end{tabular}
\endgroup
\end{table*}

\begin{table*}[t]
\centering
\caption{Formula parameters and the sequence-length convention.}
\label{tab:appendix_c1_formula_parameters}
\begingroup
\setlength{\tabcolsep}{3pt}
\scriptsize
\begin{tabular}{p{0.20\textwidth}p{0.13\textwidth}p{0.09\textwidth}p{0.12\textwidth}p{0.40\textwidth}}
\toprule
Item & Symbol & Value & Unit & Notes \\
\midrule
Batch size & $B$ & 64 & samples & One batch. \\
Sequence length & $S$ & 128 & tokens & Fixed for every task, including SST-2. \\
Hidden width & $C$ & 768 & channels & \\
FFN width & $F$ & 3072 & channels & \\
Attention heads & $h$ & 12 & heads & \\
Head width & $d_k$ & 64 & channels & $C/h$. \\
Time steps & $T$ & 16 & steps & Per-step rate is $s/T$. \\
Encoder blocks & $L$ & 12 & blocks & Amortizes the QNN pooler transfer. \\
Eight-position fanout & $N_8$ & $5.96\times10^{10}$ & deliveries/block & $4BSC^2 + 2BhS^2d_k + 2BSCF$. \\
\bottomrule
\end{tabular}
\endgroup
\end{table*}

\begin{table*}[t]
\centering
\caption{Energy breakdown per encoder block on SST-2, in
mJ/block. Values are shown to four significant digits; component
sums equal the totals exactly in the unrounded data.}
\label{tab:appendix_c2_breakdown}
\begingroup
\setlength{\tabcolsep}{7pt}
\small
\begin{tabular}{lrrrr}
\toprule
Model & Move & W-read & Compute & Total \\
\midrule
SpikingBERT & 715.1 & 25.38 & 10.36 & 750.8 \\
SpikingLM & 236.0 & 7.956 & 3.408 & 247.4 \\
Sorbet & 371.9 & 13.20 & 5.419 & 390.5 \\
Otters & 137.8 & 5.378 & 3.858 & 147.1 \\
\midrule
Matterhorn QNN (dense) & 61.20 & 6.348 & 3.991 & 71.54 \\
Bit-serial QNN ($k{=}0$ pattern) & 192.5 & 6.318 & 2.763 & 201.6 \\
Bit-serial QNN ($k{=}1$ pattern) & 114.1 & 3.743 & 1.640 & 119.4 \\
Bit-serial QNN ($k{=}2$ pattern) & 77.38 & 2.539 & 1.115 & 81.03 \\
Matterhorn TTFS & 110.0 & 4.424 & 3.096 & 117.5 \\
Matterhorn M-TTFS ($k{=}0$) & 72.07 & 3.077 & 2.051 & 77.19 \\
Matterhorn M-TTFS ($k{=}1$) & 44.86 & 2.112 & 1.301 & 48.27 \\
\bottomrule
\end{tabular}
\endgroup
\end{table*}

\begin{figure*}[t]
\centering
\includegraphics[width=0.76\textwidth]{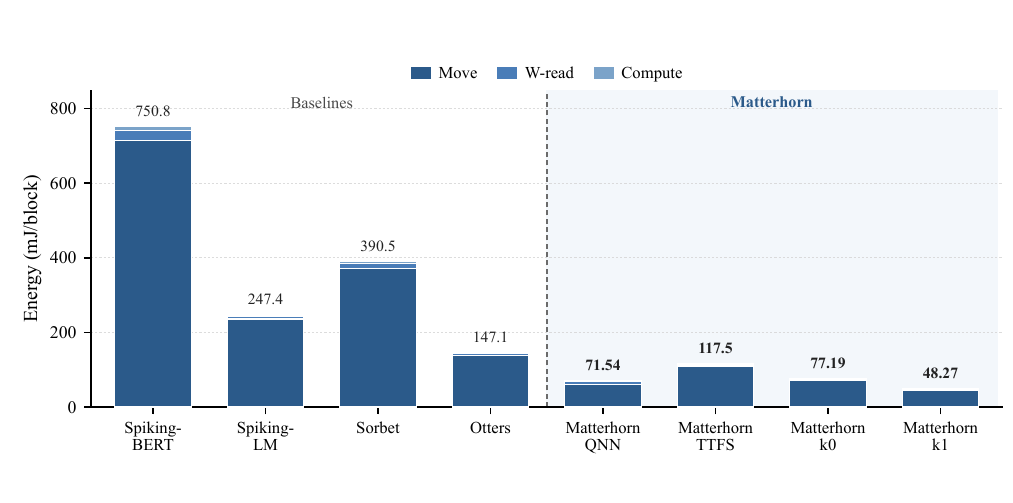}
\caption{Stacked component breakdown of
Table~\ref{tab:appendix_c2_breakdown}. Movement dominates every
configuration, which is why the encoding, rather than the
arithmetic, decides the ranking.}
\label{fig:appendix_c3}
\end{figure*}

\end{document}